\newcommand{\new}[1]{{\color{red} #1}}
\begin{document}

\title{Private Query Release Assisted by Public Data}
\author{Raef Bassily\thanks{Department of Computer Science \& Engineering, The Ohio State University.} \and Albert Cheu\thanks{Khoury College of Computer and Information Sciences, Northeastern University.} \and Shay Moran\thanks{Google AI Princeton} \and Aleksandar Nikolov\thanks{Department of Computer Science, University of Toronto} \and Jonathan Ullman\thanks{Khoury College of Computer and Information Sciences, Northeastern University.} \and Zhiwei Steven Wu\thanks{Department of Computer Science and Engineering, University of Minnesota.}}
\date{\today}
\maketitle

\begin{abstract}
We study the problem of differentially private query release assisted by access to public data. In this problem, the goal is to answer a large class $\cH$ of statistical queries with error no more than $\alpha$ using a combination of public and private samples. The algorithm is required to satisfy differential privacy only with respect to the private samples. We study the limits of this task in terms of the private and public sample complexities.

First, we show that we can solve the problem for any query class $\cH$ of finite VC-dimension using only $d/\alpha$ public samples and $\sqrt{p}d^{3/2}/\alpha^2$ private samples, where $d$ and $p$ are the VC-dimension and dual VC-dimension of $\cH$, respectively. In comparison, with only private samples, this problem cannot be solved even for simple query classes with VC-dimension one, and without any private samples, a larger public sample of size $d/\alpha^2$ is needed. Next, we give sample complexity lower bounds that exhibit tight dependence on $p$ and $\alpha$. For the class of decision stumps, we give a lower bound of $\sqrt{p}/\alpha$ on the private sample complexity whenever the public sample size is less than $1/\alpha^2$. Given our upper bounds, this shows that the dependence on $\sqrt{p}$ is necessary in the private sample complexity. We also give a lower bound of $1/\alpha$ on the public sample complexity for a broad family of query classes, which by our upper bound, is tight in $\alpha$.

\end{abstract}

\section{Introduction}

The ability to answer statistical queries on a sensitive data set in a privacy-preserving way is one of the most fundamental primitives in private data analysis. In particular, this task has been at the center of the literature of differential privacy since its emergence \cite{DN03, DMNS06, BLR08} and is central to the upcoming 2020 US Census release~\cite{DajaniLSKRMGDGKKLSSVA17}. In its basic form, the problem of differentially private query release can be described as follows. Given a class $\cH$ of queries $h:\cX\rightarrow \{\pm 1\}$\footnote{In this work, we focus on classes of binary functions (known in the literature of DP as \emph{counting queries}).} defined over some domain $\cX$, and a data set $\vec{x} = (x_1, \ldots , x_n)$ of i.i.d.\ drawn samples from some unknown distribution $\bD$ over $\cX$, the goal is to construct an $(\eps, \delta)$-differentially private algorithm that, given $\cH$ and $\vec{x}$, outputs a mapping $G: \cH \rightarrow [-1, 1]$ such that for every $h\in \cH$, $G(h)$ gives an accurate estimate for the true mean $\ex{x\sim \bD}{h(x)}$ up to some additive error $\alpha$. 

A central question in private query release is concerned with characterizing the \emph{private sample complexity}, which is the least amount of private samples required to perform this task up to some additive error $\alpha$. This question has been extensively studied in the literature on differential privacy \cite{DN03, HR10, MN12, BLR08, BUV14, SU15}. For general query classes, it was shown that the optimal bound on the private sample complexity in terms of $|\cX|$, $|\cH|$, and the privacy parameters is attained by the Private Multiplicative Weights (PMW) algorithm due to Hardt and Rothblum \cite{HR10}. This optimality was established by the lower bound due to Bun et al.~\cite{BUV14}. This result implied the impossibility of differentially private query release for certain classes of infinite size. Moreover, subsequent results by \cite{BNSV15, ALMM19} implied that this impossibility is true even for a simple class such as one-dimensional thresholds over $\mathbb{R}$. On the other hand, without any privacy constraints, the query release problem is equivalent to attaining uniform convergence over $\cH$, and hence the sample complexity is given by $d/\alpha^2,$ where $d$ is the VC-dimension of $\cH$.

In practice, it is often feasible to collect some amount of ``public'' data that poses no privacy concerns. For example, in the language of consumer privacy, there is considerable amount of data collected from the so-called ``opt-in'' users, who voluntarily offer or sell their data to companies or organizations. Such data is deemed by its original owner to pose no threat to personal privacy. There are also a variety of other sources of public data that can be harnessed. 

Motivated by the above observation, and by the limitations in the standard model of differentially private query release, in this work, we study a relaxed setting of this problem, which we call \emph{Public-data-Assisted Private (PAP)} query release. In this setting, the query-release algorithm has access to two types of samples from the unknown distribution: private samples that contain personal and sensitive information (as in the standard setting) and public samples. The goal is to design algorithms that can exploit as little public data as possible to achieve non-trivial savings in sample complexity over standard DP query-release algorithms, while still providing strong privacy guarantees for the private dataset.

\subsection{Our results}
In this work we study the private and public sample complexities of PAP query-release algorithms, and give upper and lower bounds on both. To describe our results, we will use $d$ and $p$ to denote the VC-dimension and dual VC-dimension of the query class $\cH$, respectively. We will use $\alpha$ to denote the target error for query release.

\begin{enumerate}[leftmargin=*] 
\item \textbf{Upper bounds:} We give a construction of a PAP algorithm that solves the query release problem for any query class $\cH$ using only $\approx d/\alpha$ public samples, and $\approx \sqrt{p}d^{3/2}/\alpha^2$ private samples. Recall that $d/\alpha^2$ samples are necessary even without privacy constraints; therefore, our upper bound on the public sample complexity shows a nearly quadratic saving. 

\item \textbf{Lower bound on private sample complexity:} We show that there is a query class $\cH$ with VC-dimension $d = \log(p)$ and dual VC-dimension $p$ such that any PAP algorithm either requires $\Omega(1/\alpha^2)$ public samples or requires $\tilde\Omega(\sqrt{p}/\alpha)$ total samples.  Thus the $\sqrt{p}$ dependence above is unavoidable.  For this class, $O(\log(p)/\alpha^2)$ public samples are enough to solve the problem with no private samples, and $\tilde{O}(\log(p)/\alpha^2 + \sqrt{p}/\alpha)$ private samples are enough to solve the problem with no public samples.  Thus, for this function class public samples essentially do not help improve the overall sample complexity, unless there are nearly enough public samples to solve the problem without private samples.

\item \textbf{Lower bound on public sample complexity:} We show that if the class $\cH$ has infinite {\it Littlestone dimension},\footnote{The Littlestone dimension is a combinatorial parameter that arises in online learning~\cite{littlestone1988learning,ben2009agnostic}.} then any PAP query-release algorithm for $\cH$ must have \emph{public} sample complexity~$\Omega(1/\alpha)$. The simplest example of such a class is the class of one-dimensional thresholds over $\mathbb{R}$. This class has VC-dimension $1$, and therefore demonstrates that the upper bound above is nearly tight.

\end{enumerate}





\subsection{Techniques} 

\paragraph{Upper bounds:} The first key step in our construction for the upper bound is to use the public data to construct a finite class $\tH$ that forms a ``good approximation'' of the original class $\cH$. Such approximation is captured via the notion of an $\alpha$-cover (Definition~\ref{defn:cover}). The number of public examples that suffices to construct such approximation
is about $d/\alpha$ \cite{ABM19}. Given this finite class $\tH$, we then reduce the original domain $\cX$ to a finite set $\cX_{\tH}$ of representative domain points, which is defined via an equivalence relation induced by $\cH$ over $\cX$ (Definition~\ref{defn:dual}). Using Sauer's Lemma, we can show that the size of such a representative domain is at most $\left(\frac{e\, \lvert \tH\rvert}{p}\right)^p$, where $p$ is the dual VC-dimension of $\cH$. At this point, we can reduce our problem to DP query-release for the finite query class $\tH$ and the finite domain $\cX_{\tH}$, which we can solve via the PMW algorithm \cite{HR10, DR14}.

\paragraph{Lower bound on private sample complexity:} 
The proof of the lower bound is based on \emph{robust tracing attacks}~\cite{BUV14,DSSUV15,SU15}.  That work proves privacy lower bounds for the class of \emph{decision stumps} over the domain $\{-1,1\}^p$, which contains queries of the form $h_j({x}) = x_j$ for some $j\in [p]$.  Specifically, they show that for any algorithm that takes at most $s \approx \sqrt{p}/\alpha$ samples, and releases the class of decision stumps with accuracy $\alpha$, there is some attacker that can ``detect'' the presence of at least $t \approx 1/\alpha^2$ of the samples.  Therefore, if the number of public samples is at most $t-1$, the attacker can detect the presence of one of the private samples, which means the algorithm cannot be differentially private with respect to the private samples.

\paragraph{Lower bound on public sample complexity:} This lower bound is derived in two steps. First, we show that PAP query-release for a class $\cH$ implies PAP learning (studied in \cite{BNS13, ABM19}\footnote{This notion was termed  ``semi-private'' learning in their work.}) of the same class with the same amount of public data. This step follows from a straightforward generalization of an analogous result by \cite{BNS13} in the standard DP model with no public data. Second, we invoke the lower bound of \cite{ABM19} on the public sample complexity of PAP learning. 

\subsection{Other related work}
To the best of our knowledge, our work is the first to formally study  differentially private query release assisted with public data. There has been work on private supervised-learning setting with access to limited public data, that is PAP learning. In particular, the notion of differentially private PAC learning assisted with public data was introduced by Beimel et al. in \cite{BNS13}, where it was called ``semi-private learning.'' They gave a construction of a learning algorithm in this setting, and derived upper bounds on the private and public sample complexities. The paper \cite{ABM19} revisited this problem and gave nearly optimal bounds on the private and public sample complexities in the agnostic PAC model. The work of \cite{ABM19} emphasizes the notion of $\alpha$-covers as a useful tool in the construction of such learning algorithms. Our PAP algorithm nicely leverages this notion in the query-release setting. 

In a similar vein, there has been work on other relaxations of private learning that do not require all parts of the data to be private. For example, \cite{chaudhuri2011sample, BNS13} studied the notion of ``label-private learning,'' where only the labels in the training set are considered private. Another line of work considers the setting in which the learning task can be reduced to privately answering classification queries \cite{hamm2016learning,papernot2017semi,papernot2018scalable,bassily2018NIPS,nandi2019privately}, where the goal is to construct a differentially private classification algorithm that predicts the labels of a sequence of public feature-vectors such that the predictions are differentially private in the private training dataset.

\section{Preliminaries}
  
We use $\cX$ to denote an arbitrary data universe, $\bD$ to denote a distribution over $\cX$, and $\cH \subseteq \{\pm 1\}^\cX$ to denote a binary \emph{hypothesis class}. 

\subsection{Tools from learning theory}

\noindent The VC dimension of a binary hypothesis class $\cH\subseteq \{\pm 1\}^{\cX}$ is denoted by $\vc(\cH)$. 

\noindent We will use the following notion of coverings:

\begin{defn}[$\alpha$-cover for a hypothesis class]\label{defn:cover}
A family of hypotheses $\tH$ is said to form an $\alpha$-cover for a
hypothesis class $\cH\subseteq \{\pm 1\}^{\cX}$ with respect to a distribution $\bD$ over $\cX$ if for every $h\in \cH$, there is $\tlh\in\tH$ such that $\pr{x\sim \bD}{h(x)\neq \tlh(x)}\leq \alpha.$
\end{defn}


\begin{defn}[Representative domain (a.k.a. the dual class)]\label{defn:dual}
Let $\tH\subseteq \{\pm 1\}^{\cX}$ be a hypothesis class. Define an equivalence relation on $\cX$ by $x\simeq x'$ if and only if $(\forall h\in\tH):h(x)=h(x')$. The representative domain induced by $\tH$ on $\cX$, denoted by $\cX_{\tH}$, is a complete set of distinct representatives from $\cX$ for this equivalence relation.
\end{defn}
For example, let $\tH$ be a class of $M$ binary thresholds over $\R$ given by: $h_{t_j}(x)=+1$ iff $x\leq t_j,~ j\in [M]$, $-\infty < t_1< t_2< \ldots < t_M <\infty$. Then, a representative domain in this case is a set of $M+1$ distinct elements; one from each of the following intervals  $(-\infty, t_1], (t_1, t_2], \ldots, (t_M, \infty).$ More generally, if $\tH$ is a class of halfspaces then a representative domain contains exactly one point in each cell of the hyperplane arrangement induced by $\tH$ (see Figure~\ref{fig:hyperplanes}). 

\begin{figure}
\begin{center}
    \includegraphics[scale=0.2]{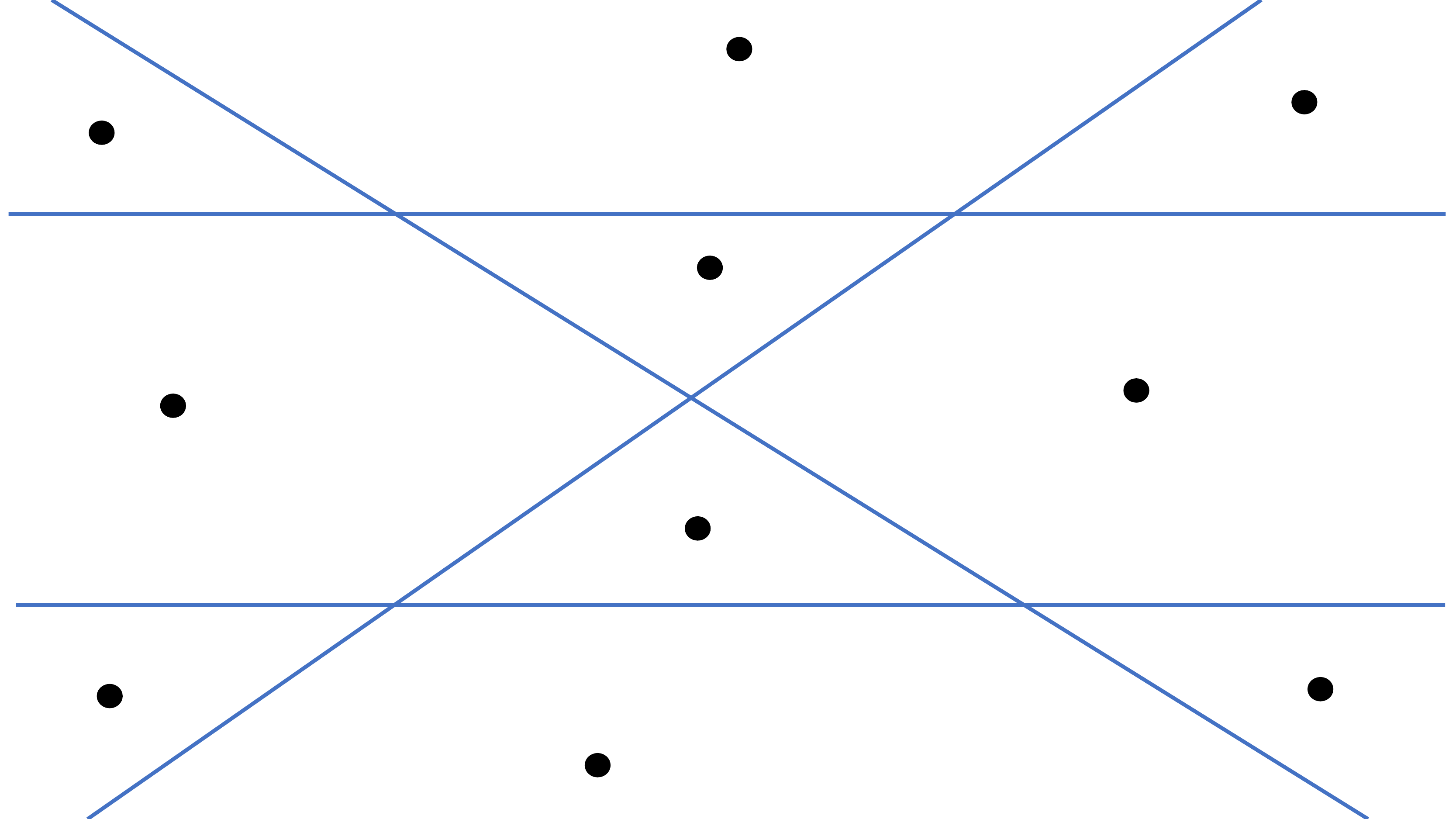}
\end{center}
\caption{A representative domain for a finite class of halfspaces}
\label{fig:hyperplanes}
\end{figure}

Note that when $\tH$ is finite then any representative domain for $\tH$ has size at most $2^{\lvert \tH\rvert}$,
since the equivalence class of each $x\in\cX$ is determined by the binary vector $(h(x))_{h\in \tH}$. Moreover, one can also make the following simple claim, which is a direct consequence of the Sauer-Shelah Lemma~\cite{sauer1972density} together with the fact that a representative domain $\cX_{\tH}$ has a one-to-one correspondence with the \emph{dual} class of $\tH$. Below, we use $\vcd(\cH)$ to denote the dual VC-dimension of a hypothesis class $\cH$, namely, $\vcd(\cH)$ is the VC-dimension of the dual class of $\cH$.
\begin{clm}\label{clm:size-rep-domain}
Let $\tH$ be a finite class of binary functions defined over a domain $\cX$. Then, the size of a representative domain $\cX_{\tH}$ satisfies:~ $\lvert\,\cX_{\tH}\,\rvert\leq \left(\frac{e\,\lvert\tH\rvert}{\vcd(\tH)}\right)^{\vcd(\tH)}$.
\end{clm}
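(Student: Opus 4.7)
The plan is to carry out exactly the reduction hinted at in the statement: convert the bound on the size of the representative domain $\cX_{\tH}$ into a bound on the size of the dual class of $\tH$, and then apply Sauer-Shelah in its usual form to the dual class.

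First I would make the correspondence between $\cX_{\tH}$ and the dual class of $\tH$ precise. Define, for every $x \in \cX$, the map $\phi_x : \tH \to \{\pm 1\}$ by $\phi_x(h) = h(x)$; the dual class is $\tH^{*} := \{\phi_x : x \in \cX\}$, viewed as a hypothesis class on the domain $\tH$. The equivalence relation in Definition~\ref{defn:dual} is $x \simeq x' \iff (\forall h \in \tH)\, h(x) = h(x')$, which is exactly $\phi_x = \phi_{x'}$. Since a representative domain contains one element from each equivalence class, the map $x \mapsto \phi_x$ induces a bijection between $\cX_{\tH}$ and $\tH^{*}$, so $|\cX_{\tH}| = |\tH^{*}|$.

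Next I would upper bound $|\tH^{*}|$ using Sauer-Shelah. By definition, $\vcd(\tH) = \vc(\tH^{*})$, and $\tH^{*}$ is a class of binary functions on the finite domain $\tH$ of size $|\tH|$. The Sauer-Shelah Lemma~\cite{sauer1972density} applied to $\tH^{*}$ yields
\[
|\tH^{*}| \;\le\; \sum_{i=0}^{\vcd(\tH)} \binom{|\tH|}{i} \;\le\; \left(\frac{e\,|\tH|}{\vcd(\tH)}\right)^{\vcd(\tH)},
\]
where the second inequality is the standard tail bound on binomial sums (valid when $\vcd(\tH) \le |\tH|$, which holds trivially; when $\vcd(\tH) = 0$ the class $\tH^{*}$ has size at most $1$ and the bound is vacuous with the usual convention).

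Combining the two steps gives $|\cX_{\tH}| = |\tH^{*}| \le (e\,|\tH|/\vcd(\tH))^{\vcd(\tH)}$, as desired. There is no genuine obstacle here; the only point worth double-checking is that ``dual VC-dimension'' is indeed defined to be $\vc(\tH^{*})$ for $\tH^{*}$ as above, so that Sauer-Shelah can be applied directly. Beyond unpacking the definitions, the proof is a one-line invocation of Sauer-Shelah.
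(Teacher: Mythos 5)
Your proof is correct and follows exactly the route the paper indicates (the paper only gestures at the argument, citing the bijection between $\cX_{\tH}$ and the dual class $\tH^{*}$ and then Sauer--Shelah; you have simply written out that argument in full). The only step worth highlighting, which you handle correctly, is that since the dual class lives on the finite domain $\tH$, the cardinality of $\tH^{*}$ coincides with its growth function evaluated on the whole domain, which is what makes the Sauer--Shelah bound apply directly.
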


The following useful fact gives a worst-case upper bound on the dual VC-dimension in terms of the VC-dimension.
\begin{fact}[\cite{Assouad83}]\label{fact:dualvc}
Let $\cH$ be a binary hypothesis class. The dual VC-dimension of $\cH$ satisfies: $\vcd(\cH)< 2^{\vc(\cH)+1}$. 
\end{fact}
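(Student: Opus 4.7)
The plan is to prove the contrapositive-style statement $\vc(\cH) \geq \lfloor \log_2 \vcd(\cH)\rfloor$, which is equivalent to $\vcd(\cH) < 2^{\vc(\cH)+1}$. The idea is to exploit the definition of dual VC-dimension, which produces $2^p$ witness points in $\cX$, and then to extract from them a shattered set in the primal sense of size roughly $\log_2 p$.

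Let $p = \vcd(\cH)$ and $d = \vc(\cH)$. By definition of the dual VC-dimension, there exist hypotheses $h_1,\ldots,h_p\in\cH$ and, for every subset $S\subseteq[p]$, a point $x_S\in\cX$ such that $h_i(x_S)=+1$ if and only if $i\in S$. Set $k := \lfloor\log_2 p\rfloor$, so that $p \geq 2^k$. Enumerate $\{0,1\}^k = \{\pi^{(1)},\ldots,\pi^{(2^k)}\}$ and, using $p \geq 2^k$, pick distinct indices $i_1,\ldots,i_{2^k}\in[p]$; write $i_\pi$ for the index corresponding to the pattern $\pi$. For each coordinate $j\in[k]$, define the subset $S_j := \{i_\pi : \pi\in\{0,1\}^k,\ \pi_j=1\}\subseteq[p]$.

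The next step is to verify that the $k$ points $x_{S_1},\ldots,x_{S_k}$ are distinct and shattered by $\{h_1,\ldots,h_p\}\subseteq\cH$. Distinctness follows because for any $j\neq j'$ the standard basis pattern $\pi = e_j\in\{0,1\}^k$ places $i_\pi$ in $S_j\setminus S_{j'}$, so $S_j\neq S_{j'}$ and therefore $x_{S_j}\neq x_{S_{j'}}$. For shattering, given any target pattern $\pi\in\{0,1\}^k$, consider $h_{i_\pi}\in\cH$. By construction, $h_{i_\pi}(x_{S_j}) = +1 \iff i_\pi\in S_j \iff \pi_j = 1$, so $h_{i_\pi}$ realizes $\pi$ on $\{x_{S_1},\ldots,x_{S_k}\}$. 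Hence $\cH$ shatters a set of $k$ points in $\cX$, giving $d\geq k = \lfloor\log_2 p\rfloor$, which rearranges to $p < 2^{d+1}$.

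The only step that requires a modicum of care is the selection of the subsets $S_j$ and the verification that the induced points are distinct; everything else is bookkeeping on the defining relation $h_i(x_S)=+1\iff i\in S$. The conceptual content is the "coordinatization" step: $2^k$ of the $p$ hypotheses are relabeled by binary patterns of length $k$, and the $S_j$'s are then defined to be exactly the $k$ coordinate projections of this labeling, which mechanically forces the $k$ chosen points $x_{S_j}$ to be shattered by those very hypotheses.
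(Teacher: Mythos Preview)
The paper does not actually prove this statement; it is recorded as a fact with a citation to Assouad (1983) and used as a black box, so there is no in-paper argument to compare against.

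Your proof is correct and is essentially the standard argument for Assouad's inequality: from a dually shattered set of $p$ hypotheses one extracts, via the ``coordinate columns'' construction, a primally shattered set of $\lfloor\log_2 p\rfloor$ domain points. One small point worth making explicit: the implication ``$S_j\neq S_{j'}$, therefore $x_{S_j}\neq x_{S_{j'}}$'' holds precisely because the defining relation $h_i(x_S)=+1\iff i\in S$ forces any $i\in S_j\setminus S_{j'}$ to witness $h_i(x_{S_j})\neq h_i(x_{S_{j'}})$. With that spelled out, the argument is complete.
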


\paragraph{Notation:} In Section~\ref{sec:upper}, we will use the following notation. Let $\tH$ be a hypothesis class defined over a domain $\cX$. For any $x\in \cX$, we define $\cX_{\tH}(x)$ as the representative $s\in\cX_{\tH}$ such that $x\simeq s$, where $\simeq$ is the equivalence relation described in Definition~\ref{defn:dual}. Note that by definition this $s\in \cX_{\tH}$ is unique. Moreover, for any $n$ and any $\vec{x}=(x_1, \ldots, x_n)\in \cX^n,$ we define $\cX_{\tH}(\vec{x})\triangleq \left(\cX_{\tH}(x_1), \ldots, \cX_{\tH}(x_n)\right)$.


\begin{defn}[Query Release]
Given a distribution $\bD$ over $\cX$ and a binary hypothesis class $\cH$, a \emph{query release} data structure $G\in [-1,1]^\cH$ (equivalently, $G:\cH\rightarrow [-1, 1]$) estimates the expected label $\ex{x\sim\bD}{h(x)}$ for all $h\in\cH$. The worst-case error is defined as $$\Err_{\bD, \cH}(G) \triangleq \sup\limits_{h\in\cH}~ \lvert\, G(h)-\ex{x\sim\bD}{h(x)}\,\rvert.$$
\end{defn}

\subsection{Tools from Differential Privacy}
Two datasets $\vec{x},\vec{x}\,' \in \cX^n$ are neighboring when they differ on one element.
\begin{defn}[Differential Privacy]
A randomized algorithm $A:\cX^n \to \cZ$ is $(\eps,\delta)$-differentially private if for all neighboring $\vec{x}, \vec{x}\,'$ and all $Z \subseteq \cZ$
\[
\pr{}{A(\vec{x}) \in Z} \leq e^\eps \pr{}{A(\vec{x}\,') \in Z} + \delta
\]
\end{defn}

\paragraph{Private Multiplicative Weights (PMW):} In our construction in Section~\ref{sec:upper}, we will use, as a black box, a well-studied algorithm in differential privacy known as Private Multiplicative-Weights \cite{HR10}. We will use a special case of the offline version of the PMW algorithm. Namely, the input query class $\widetilde{\cH}$ is finite, and PMW runs over all the queries in the input class (in any order) to perform its updates, and finally outputs a query release data structure $\widetilde{G}\in [-1, 1]^{\widetilde{\cH}}$. When the input private data set $\vec{s}$ is drawn i.i.d. from some unknown distribution $\widetilde{\bD}$, the accuracy goal is to have a data structure $\widetilde{G}$ such that $\Err_{\widetilde{\bD}, \widetilde{\cH}}(\widetilde{G})$ is small. The outline (inputs and output of the PMW algorithm) is described in Algorithm~\ref{Alg:pmw}. 

\begin{algorithm}
\caption{An outline for the Private Multiplicative Weights Algorithm (PMW)}

\KwIn{Private data set $\vec{s}\in \widetilde{\cX}^n$ (where $\widetilde{\cX}$ is a finite domain); A finite query (hypothesis) class $\widetilde{\cH} \subseteq \{\pm 1\}^{\widetilde{\cX}}$; accuracy parameters $\alpha, \beta$; privacy parameters $\eps, \delta$.}
\KwOut{A data structure $\widetilde{G}:\widetilde{\cH}\rightarrow [-1, 1]$.}
\label{Alg:pmw}
\end{algorithm}

The following lemma is an immediate corollary of the accuracy guarantee of the PMW algorithm \cite{HR10,DR14}. In particular, it follows from combining the empirical accuracy of PMW with a standard uniform-convergence argument.

\begin{lem}[Corollary of Theorem 4.15 in \cite{DR14}]\label{lem:pmw}
For any $0 < \eps, \delta <1$, the PMW algorithm is $(\eps, \delta)$-differentially private. Let $\widetilde{\bD}$ be any distribution over $\widetilde{\cX}$. For any $0< \alpha, \beta <1$, given an input data set $\vec{s}\sim \widetilde{\bD}^n$ such that \ifnum\icml=0
$$n\geq \frac{200\cdot\sqrt{\log\left(\lvert\widetilde{\cX}\rvert\right)\,\log(2/\delta)}\,\left(\log\left(\lvert\widetilde{\cH}\rvert\right)+\log\left(\frac{128\,\log\left(\lvert\widetilde{\cX}\rvert\right)}{\alpha^2\beta}\right)\right)}{\eps\, \alpha^2},$$ \else \begin{align*}n\geq{}& \frac{200}{\eps\, \alpha^2} \cdot\sqrt{\log\left(\lvert\widetilde{\cX}\rvert\right)\,\log(2/\delta)}\,\cdot \\
&\left(\log\left(\lvert\widetilde{\cH}\rvert\right)+\log\left(\frac{128\,\log\left(\lvert\widetilde{\cX}\rvert\right)}{\alpha^2\beta}\right)\right), \end{align*}\fi
then, with probability at least $1-\beta$, PMW outputs a data structure $\widetilde{G}$ satisfying $\Err_{\widetilde{\bD}, \widetilde{\cH}}(\widetilde{G})\leq \alpha$.
\end{lem}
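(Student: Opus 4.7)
The plan is to combine the two ingredients mentioned in the lemma's preamble: the $(\eps,\delta)$-privacy and empirical-accuracy guarantees of PMW from Theorem 4.15 of \cite{DR14}, and a standard uniform-convergence argument over the finite class $\widetilde{\cH}$. Privacy is immediate from the cited theorem, so no additional work is required there. For accuracy, the same theorem gives an \emph{empirical} guarantee: with probability at least $1-\beta/2$ over PMW's internal randomness, the output $\widetilde{G}$ satisfies
\[
\max_{h\in\widetilde{\cH}} \bigl|\widetilde{G}(h) - \hat{\mu}_{\vec s}(h)\bigr| \leq \alpha/2,
\]
where $\hat{\mu}_{\vec s}(h) \triangleq \frac{1}{n}\sum_{i=1}^n h(s_i)$, provided $n$ meets the sample-size condition in the lemma (with $\alpha/2$ and $\beta/2$ in place of $\alpha$ and $\beta$, which only affects absolute constants).

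Next, since $\vec s\sim \widetilde{\bD}^n$ is i.i.d.\ and each $h(s_i)\in\{\pm 1\}$, Hoeffding's inequality combined with a union bound over the finite class $\widetilde{\cH}$ shows that with probability at least $1-\beta/2$ over the draw of $\vec s$,
\[
\max_{h\in\widetilde{\cH}} \bigl|\hat\mu_{\vec s}(h) - \ex{x\sim\widetilde{\bD}}{h(x)}\bigr| \leq \alpha/2,
\]
whenever $n \geq 2\log(4|\widetilde{\cH}|/\beta)/\alpha^2$. The sample-size requirement stated in the lemma dominates this simpler bound, so both events hold simultaneously under the hypothesis.

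Conditioning on the intersection of these two events (total failure probability at most $\beta$ by a union bound) and applying the triangle inequality yields $\Err_{\widetilde{\bD},\widetilde{\cH}}(\widetilde{G})\leq \alpha$, as desired. The only step requiring care --- and the main, though mild, obstacle --- is the bookkeeping of constants when substituting $\alpha/2$ and $\beta/2$ into the PMW bound, in order to confirm that the expression in the hypothesis of the lemma indeed absorbs both the uniform-convergence requirement and the factor-of-$2$ losses from splitting the failure probability and the accuracy budget.
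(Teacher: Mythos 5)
Your proposal is correct and follows exactly the route the paper indicates in the sentence preceding the lemma: invoke the empirical accuracy of PMW (Theorem 4.15 of \cite{DR14}) with accuracy $\alpha/2$ and failure probability $\beta/2$, establish uniform convergence over the finite class $\widetilde{\cH}$ via Hoeffding plus a union bound with the remaining $\alpha/2$ and $\beta/2$, and combine by the triangle inequality. The one small slip is the Hoeffding constant: since $h(s_i)\in\{\pm 1\}$ has range $2$, the bound for deviation $\alpha/2$ is $2\exp(-n\alpha^2/8)$, so the requirement is $n\geq 8\log(4\lvert\widetilde{\cH}\rvert/\beta)/\alpha^2$ rather than your $2\log(4\lvert\widetilde{\cH}\rvert/\beta)/\alpha^2$; this is still dominated by the lemma's stated sample size, so the argument goes through.
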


\subsection{Our Model: PAP Query Release}
In this paper, we study an extension of the problem of differentially private query release \cite{DR14} where the input data have two types: private and public. Formally, let $\bD$ be any distribution over a data domain $\cX$. Let $\cH\subseteq \{\pm 1\}^{\cX}$ be a class of binary queries. We consider a family of algorithms whose generic description (namely, inputs and outputs) is given by Algorithm~\ref{Alg:generic-QR}. 

\begin{algorithm}
\caption{An outline for a generic Public-data-Assisted Private (PAP) algorithm for query release}

\KwIn{Private data set $\vec{x} \in \cX^n$; public data set $\vec{w}\in \cX^m$; A query (hypothesis) class $\cH \subseteq \{\pm 1\}^{\cX}$; accuracy and confidence parameters $\alpha, \beta$; privacy parameters $\eps, \delta$.}
\KwOut{A data structure $G:\cH \rightarrow [-1, 1]$.}
\label{Alg:generic-QR}
\end{algorithm}

Given the query class $\cH$, a private data set $\vec{x}\sim\bD^n$ (i.e., a data set whose elements belong to $n$ private users), and a public data set $\vec{w}\in \cX^m$ (i.e., a data set whose elements belong to $m$ users with no privacy constraint), the algorithm outputs a query release data structure $G:\cH \rightarrow [-1, 1]$. 
Such an algorithm is required to be $(\eps, \delta)$-differentially private but only with respect to the private data set.  We call such an algorithm \emph{Public-data-Assisted Private (PAP) query-release} algorithm. 
The accuracy/utility of the algorithm is determined by the worst-case estimation error incurred by its output data structure $G$ on any query (hypothesis) $h\in \cH$.

\begin{defn}[$(\alpha, \beta, \eps, \delta)$ PAP query-release algorithm]\label{defn:pap-qr}
Let $\cH\subseteq \{\pm 1\}^{\cX}$ be a query class. Let $\cA: \cX^*\rightarrow [-1, 1]^{\cH}$ be a randomized algorithm in the family outlined in Algorithm~\ref{Alg:generic-QR}. We say that $\cA$ is $(\alpha, \beta, \eps, \delta)$ Public-data-Assisted Private (PAP) query-release algorithm for $\cH$ with private sample size $n$ and public sample size $m$ if the following conditions hold: 

\begin{enumerate}
    \item For every distribution $\bD$ over $\cX$,
    given data sets $\vec{x}\sim \bD^{n}$ and $\vec{w}\sim \bD^{m}$ as inputs to $\cA$, with probability at least $1-\beta$ (over the choice of $\vec{x}, \vec{w},$ and the random coins of $\cA$), $\cA$ outputs a function (data structure) $\cA\left(\vec{x}, \vec{w}\right)=G\in [-1, 1]^{\cH}$ satisfying~ $\Err_{\bD, \cH}(G)\leq \alpha.$ \label{cond:1}
    \item For all $\vec{w}\in\cX^{m},$ $\cA\left(\cdot, \vec{w}\right)$ is $(\eps, \delta)$-differentially private. \label{cond:2}
\end{enumerate}

\end{defn}

\begin{rem}\label{rem:proper_vs_improper_san}
In our description in Algorithm~\ref{Alg:generic-QR}, the algorithm is required to output a data structure $G:\cH\rightarrow [-1, 1]$ and not necessarily a ``synthetic'' data set $\vec{v}\in \cX^{n'}$ for some number $n'$ as in what is referred to as ``private proper sanitizers'' in \cite{BNS13}. In that special case, obviously the output data set can be used to define a data structure $G'$; namely, for any $h\in\cH$, $G'(h)\triangleq \frac{1}{n'}\sum_{i\in[n']}h(v_i)$. Moreover, in the general case, ignoring computational complexity, the output data structure can also be used to construct a data set as pointed out in \ifnum\icml=0 \cite[Remark~2.18]{BNS13} \else Remark 2.18 of \cite{BNS13} \fi. In particular, given a data structure $G$ whose error $\leq \alpha$, then it suffices find a data set $\vec{v}\in\cX^{n'}$, where $n'> \vc(\cH)/\alpha^2$, such that $\lvert \frac{1}{n'}\sum_{i\in [n']}h(v_i))-G(h)\rvert \leq 2\alpha$ for all $h\in \cH$, and hence the accuracy requirement would follow by the triangle inequality. Also, we know that this data set must exist. This is because by a standard uniform-convergence argument, a data set $\vec{s}\sim \bD^{n'}$ will, with a non-zero probability, satisfy $\lvert \frac{1}{n'}\sum_{i\in [n']}h(s_i))-\ex{x\sim\bD}h(x)\rvert \leq \alpha$ for all $h\in \cH$, and hence, by the triangle inequality, $\lvert \frac{1}{n'}\sum_{i\in [n']}h(s_i))-G(h)\rvert \leq 2\alpha$ for all $h\in \cH$.
\end{rem}

\section{A PAP Query-Release Algorithm for Classes of Finite VC-Dimension}\label{sec:upper}

We now describe a construction of a public-data-assisted private query release algorithm that works for any class with a finite VC-dimension. 

Our construction is given by Algorithm~\ref{Alg:prvQ}. The key idea of the construction is to use the public data to create a finite $\alpha$-cover $\tH$ for the input query class $\cH$ (see Definition~\ref{defn:cover}), then, run the PMW algorithm on the finite cover and the representative domain $\cX_{\tH}$ given by the dual of $\tH$ (see Definition~\ref{defn:dual}).

\begin{algorithm}
\caption{$\prvQ$ Public-data-assisted Private Query-Release Algorithm}

\KwIn{Private data set $\vec{x}=(x_1, \ldots, x_n) \in \cX^n$; public data set $\vec{w}=(w_1, \ldots, w_m) \in \cX^m$; A query  class $\cH \subseteq \{\pm 1\}^{\cX}$; accuracy and confidence parameters $\alpha, \beta$; privacy parameters $\eps, \delta$.}
\KwOut{A data structure $G:\cH \rightarrow [-1, 1]$.}

\medskip

\tcc{Use public data to construct $\alpha$-cover for $\cH$}

Let $T=\{\hw_1, \ldots,\hw_{\hatm}\}$ be the set of points in $\cX$ appearing at least once in $\vec{w}$.

Let $\Pi_{\cH}(T)=\left\{\left(h(\hw_1), \ldots, h(\hw_{\hatm})\right):~h\in\cH\right\}.$

Initialize $\tH=\emptyset$.

\For{each $\vec{y}=(y_1, \ldots, y_{\hatm})\in\Pi_{\cH}(T)$}{
Add to $\tH$ one arbitrary $h\in\cH$ that satisfies $h(\hw_j)=y_j$ for every $j=1,\ldots, \hatm$. 
} \label{step:rep-hyp}

\medskip

Let $\cX_{\tH}$ be a representative domain induced by $\tH$ on $\cX$ (as in Definition~\ref{defn:dual}).

\tcc{replace each point in the private data set $\vec{x}$ with its representative in $\cX_{\tH}$}

$\vec{s}\gets \cX_{\tH}(\vec{x})$

\medskip

\tcc{Run PMW algorithm over the data-set of representatives  $\vec{s} \in \cX^n_{\tH}$ and $\tH$}

$\widetilde{G}\gets \mathrm{PMW}\left(\, \vec{s},~ \tH, ~ \alpha/2, ~\beta/2, ~ \eps, ~\delta\right).$ 

\Return{ $G=G\left(\vec{w}\,,~ \tH\,,~ \widetilde{G}\,,~ \cdot\right)$}  \quad \tcc{see code below}

\medskip

\begin{center}
    ////////////////////////////////////////
\end{center} 

\tcc{Construct a function $G:\cH\rightarrow [-1, 1]$ as follows:}

\medskip

\textbf{Function}~ $G=G\left(\vec{w}\,,~ \tH\,,~ \widetilde{G}\,,~ \cdot\right)$

\medskip

\textbf{Input:} A query (hypothesis) $h\in \cH$.

\textbf{Output:} An estimate $r\in [-1, 1]$.

\medskip

$\tlh \gets \mathrm{Project}_{\tH, \vec{w}}(h)$, 

where $\mathrm{Project}_{\tH, \vec{w}}(h)$ denotes the unique $\tlh\in\tH$ s.t. $\left(\tlh(w_1),\ldots, \tlh(w_m)\right)=\big(\,h(w_1),\ldots, h(w_m)\,\big)$
\medskip

$r\gets \widetilde{G}(\tlh)$

\Return{$r$}

\medskip

\label{Alg:prvQ}
\end{algorithm}

\begin{thm}[Upper Bound]\label{thm:upper}
$\prvQ$ (Algorithm~\ref{Alg:prvQ}) is an $(\alpha, \beta, \eps, \delta)$ public-data-assisted private query-release algorithm for $\cH$ whose private and public sample complexities satisfy:
\begin{align*}
    n &= O\left(\frac{\left(d\,\log(1/\alpha)+\log(1/\beta)\right)^{3/2}\sqrt{p\,\log(1/\delta)}}{\eps\,\alpha^2}\right),\\
    m &= O\left(\frac{d\,\log(1/\alpha)+\log(1/\beta)}{\alpha}\right),
\end{align*}
where $d=\vc(\cH)$ and $p=\vc^{\bot}(\cH)$.
\end{thm}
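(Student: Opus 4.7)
The proof splits naturally into a privacy argument and an accuracy/sample-complexity argument. Privacy is immediate: the class $\tH$ and the representative domain $\cX_{\tH}$ depend only on the public input $\vec{w}$; the coordinate-wise map $\vec{x}\mapsto\vec{s}:=\cX_{\tH}(\vec{x})$ is neighbor-preserving; Lemma~\ref{lem:pmw} makes the PMW output $(\eps,\delta)$-differentially private in $\vec{s}$; and producing $G$ from $\widetilde{G}$ and $\vec{w}$ is post-processing. Hence the whole algorithm is $(\eps,\delta)$-differentially private in $\vec{x}$ for any fixed $\vec{w}$, matching condition~\ref{cond:2} of Definition~\ref{defn:pap-qr}.

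The first substantive step is a cover guarantee from public data: I would show that, with probability at least $1-\beta/2$ over $\vec{w}\sim\bD^m$, every $h\in\cH$ satisfies $\pr{x\sim\bD}{h(x)\neq\tlh(x)}\le\alpha/4$, where $\tlh=\mathrm{Project}_{\tH,\vec{w}}(h)$. The key object is the symmetric-difference class $\cC:=\{\mathbbm{1}[h(x)\neq h'(x)]:h,h'\in\cH\}$, which has VC-dimension $O(d)$ since it is obtained from $\cH\times\cH$ by a fixed Boolean operation. By construction $h$ and $\tlh$ agree on every point in $\vec{w}$, so the corresponding function in $\cC$ vanishes on the sample. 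The standard realizable-PAC sample-complexity bound applied to $\cC$ with the all-zero target then yields the desired guarantee whenever $m=\Omega\bigl((d\log(1/\alpha)+\log(1/\beta))/\alpha\bigr)$, matching the bound on $m$ in the theorem.

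Next, I would set up the PMW accuracy analysis and combine it with the cover bound. Let $\widetilde{\bD}$ be the push-forward of $\bD$ under $\cX_{\tH}$. By the defining equivalence relation on $\cX$, every $\tilde h\in\tH$ is constant on each equivalence class, so $\tilde h(\cX_{\tH}(x))=\tilde h(x)$; this gives $\ex{s\sim\widetilde{\bD}}{\tilde h(s)}=\ex{x\sim\bD}{\tilde h(x)}$, and also $\vec{s}\sim\widetilde{\bD}^{\,n}$. Lemma~\ref{lem:pmw} invoked with accuracy $\alpha/2$ and confidence $\beta/2$ then gives $|\widetilde{G}(\tilde h)-\ex{x\sim\bD}{\tilde h(x)}|\le\alpha/2$ for all $\tilde h\in\tH$. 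For any $h\in\cH$ with projection $\tlh$, the triangle inequality together with the cover bound yields
\[
\bigl|G(h)-\ex{x\sim\bD}{h(x)}\bigr|\;\le\;\bigl|\widetilde{G}(\tlh)-\ex{x\sim\bD}{\tlh(x)}\bigr|+\bigl|\ex{x\sim\bD}{\tlh(x)}-\ex{x\sim\bD}{h(x)}\bigr|\;\le\;\tfrac{\alpha}{2}+2\cdot\tfrac{\alpha}{4}\;=\;\alpha,
\]
i.e.\ condition~\ref{cond:1} of Definition~\ref{defn:pap-qr}, after a union bound with the cover event.

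The main obstacle is the bookkeeping needed to plug the right size estimates into Lemma~\ref{lem:pmw}. By Sauer-Shelah applied to $\cH$ restricted to the $\le m$ distinct points of $T$, $|\tH|\le(em/d)^d$, so $\log|\tH|=O(d\log(1/\alpha))$ once $m=O(d/\alpha)$. Since $\tH\subseteq\cH$, any set of hypotheses in $\tH$ dually shattered by points of $\cX$ is also dually shattered inside $\cH$, so $\vcd(\tH)\le\vcd(\cH)=p$. Claim~\ref{clm:size-rep-domain} then gives $\log|\cX_{\tH}|=O(pd\log(1/\alpha))$. Feeding these into Lemma~\ref{lem:pmw} with parameters $(\alpha/2,\beta/2,\eps,\delta)$ yields
\[
n=O\!\left(\frac{\sqrt{pd\log(1/\alpha)\cdot\log(1/\delta)}\,\bigl(d\log(1/\alpha)+\log(1/\beta)\bigr)}{\eps\,\alpha^2}\right),
\]
and the elementary inequality $\sqrt{A}(A+B)\le(A+B)^{3/2}$ with $A=d\log(1/\alpha)$, $B=\log(1/\beta)$ collapses this into the claimed form for $n$, completing the plan.
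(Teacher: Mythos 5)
Your proposal is correct and follows essentially the same route as the paper: privacy via post-processing the PMW output of a neighbor-preserving map, accuracy via the $\alpha/4$-cover from public data combined with the PMW guarantee over the representative domain, and the size bookkeeping via Sauer--Shelah and Claim~\ref{clm:size-rep-domain}. The only substantive difference is that where the paper cites Lemma~3.3 of \cite{ABM19} for the cover guarantee, you re-derive it via the symmetric-difference class $\cC=\{\mathbbm{1}[h\neq h']:h,h'\in\cH\}$ (with $\vc(\cC)=O(d)$) and the $\epsilon$-net/realizable-PAC bound applied to hypotheses consistent with the all-zero target on $\vec{w}$; this is a correct and self-contained argument for the same fact. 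One small inaccuracy: you write $\log|\tH|=O(d\log(1/\alpha))$ and hence $\log|\cX_{\tH}|=O(pd\log(1/\alpha))$, but with $m=\Theta((d\log(1/\alpha)+\log(1/\beta))/\alpha)$ one gets $\log|\tH|=O(d\log(1/\alpha)+\log(1/\beta))$ and $\log|\cX_{\tH}|=O(p(d\log(1/\alpha)+\log(1/\beta)))$; since you need a \emph{lower} bound on $n$ to satisfy Lemma~\ref{lem:pmw}, you should carry the $\log(1/\beta)$ term through rather than drop it — doing so still collapses to the stated $(d\log(1/\alpha)+\log(1/\beta))^{3/2}\sqrt{p\log(1/\delta)}/(\eps\alpha^2)$ form, so the final bound is unaffected.
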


\paragraph{Remark:} By Fact~\ref{fact:dualvc}, we can further bound the private sample complexity for general query classes as 
$$n = O\left(\frac{\left(d\,\log(1/\alpha)+\log(1/\beta)\right)^{3/2}\sqrt{2^{d}\,\log(1/\delta)}}{\eps\,\alpha^2}\right).$$

In the proof of Theorem~\ref{thm:upper},  we use the following lemma from \cite{ABM19}.

\begin{lem}[Lemma~3.3 in \cite{ABM19}]\label{lem:cover}
Let $\vec{w}\sim\bD^{m}$. Then, with probability at least $1-\beta/2,$ the family $\tH$ constructed in Step~\ref{step:rep-hyp} of Algorithm~\ref{Alg:prvQ} is an $\alpha/4$-cover for $\cH$ w.r.t.\ $\bD.$ In particular, for every $h\in\cH$, we have $\pr{x\sim\bD}{h(x)\neq \tlh(x)}\leq \alpha/4,$ where $\tlh=\mathrm{Project}_{\tH, \vec{w}}(h)$ (see Algorithm~\ref{Alg:prvQ} for the definition of $\mathrm{Project}_{\tH, \vec{w}}$\,), as long as 
$$m=\Omega\left(\frac{d\,\log(1/\alpha)+\log(1/\beta)}{\alpha}\right)$$
\end{lem}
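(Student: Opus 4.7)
The plan is to reduce the claim to a standard realizable-case VC uniform convergence bound applied to the \emph{disagreement class}
$$\cG \;\triangleq\; \bigl\{\, x\mapsto \mathbbm{1}[h(x)\neq h'(x)] \;:\; h, h'\in \cH \,\bigr\}.$$
The key observation coming from the construction of $\tH$ in Step~\ref{step:rep-hyp} is: for every $h\in\cH$, the representative $\tlh = \mathrm{Project}_{\tH,\vec{w}}(h)\in \tH$ exactly matches $h$ on every point of the public sample, i.e.\ $\tlh(w_i)=h(w_i)$ for every $i\in[m]$. This holds because Step~\ref{step:rep-hyp} adds to $\tH$ one hypothesis realizing each dichotomy in $\Pi_{\cH}(T)$, and in particular a hypothesis matching $h$ on the distinct public points $\hw_1,\ldots,\hw_{\hatm}$; any such hypothesis automatically matches $h$ on every $w_i$, since the label at a repeated point is determined by the label at its distinct representative. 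Hence the function $g_h(x) \triangleq \mathbbm{1}[h(x)\neq \tlh(x)]$ lies in $\cG$ and has empirical mean zero on $\vec{w}$.

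Next I would bound $\vc(\cG)=O(d)$ via Sauer--Shelah. Since each $g\in\cG$ is determined by an ordered pair $(h,h')\in \cH^2$, the growth function satisfies
$$\tau_\cG(n) \;\le\; \tau_\cH(n)^2 \;\le\; \left(\tfrac{en}{d}\right)^{2d}.$$
A class whose growth function fails to reach $2^n$ cannot shatter a set of size $n$, and the inequality $(en/d)^{2d}<2^n$ holds once $n=\Theta(d)$, giving $\vc(\cG)=O(d)$.

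Finally, apply the classical $\varepsilon$-net / realizable-case uniform convergence theorem (Vapnik--Chervonenkis; Blumer--Ehrenfeucht--Haussler--Warmuth) to $\cG$: for
$$m \;=\; \Omega\!\left(\frac{\vc(\cG)\,\log(1/\alpha)+\log(1/\beta)}{\alpha}\right) \;=\; \Omega\!\left(\frac{d\,\log(1/\alpha)+\log(1/\beta)}{\alpha}\right),$$
with probability at least $1-\beta/2$ over $\vec{w}\sim\bD^m$, every $g\in \cG$ whose empirical mean on $\vec{w}$ equals zero must satisfy $\ex{x\sim\bD}{g(x)}\le \alpha/4$. Instantiating this at $g=g_h$ for each $h\in \cH$ yields $\pr{x\sim\bD}{h(x)\neq \tlh(x)}\le \alpha/4$ simultaneously for all $h\in \cH$, which is exactly the $\alpha/4$-cover property.

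The argument is essentially a one-line reduction to a textbook tool, so there is no real obstacle; the one subtlety worth double-checking is that $\tlh$ matches $h$ on the \emph{entire} sample $\vec{w}$ (not merely on $T$), since that is what places us in the realizable regime for $\cG$ and allows the tighter $1/m$ rate rather than $1/\sqrt m$. This is precisely what the enumeration over $\Pi_{\cH}(T)$ in Step~\ref{step:rep-hyp} guarantees.
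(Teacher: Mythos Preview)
The paper does not actually prove this lemma; it simply cites it as Lemma~3.3 of \cite{ABM19} and uses it as a black box. Your argument is correct and is the standard way to establish such a covering guarantee: the disagreement class $\cG$ has growth function at most $\tau_\cH(n)^2$, hence $\vc(\cG)=O(d)$, and the realizable $\varepsilon$-net theorem then gives that any $g\in\cG$ vanishing on the sample has true mean at most $\alpha/4$ once $m=\Omega\bigl((d\log(1/\alpha)+\log(1/\beta))/\alpha\bigr)$. You were right to flag the one subtlety---that $\tlh$ depends on $\vec{w}$---and to resolve it by noting that the uniform convergence is over \emph{all} of $\cG$, so it applies to whichever $g_h=\mathbbm{1}[h\neq\tlh]$ the algorithm ends up producing. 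This is almost certainly the argument in \cite{ABM19} as well.
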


\begin{proof}[Proof of Theorem~\ref{thm:upper}]
First, note that for any realization of the public data set $\vec{w},$ $\prvQ$ is $(\eps, \delta)$-differentially private w.r.t.\ the private data set. Indeed, the private data set $\vec{x}$ is only used to construct $\vec{s}=\cX_{\tH}(\vec{x})$, which is the input data set to the PMW algorithm. The output of PMW is then used to construct the output data structure $G$. Moreover, for any pair of neighboring data sets $\vec{x}, \vec{x}\,'$, the pair $\cX_{\tH}(\vec{x}), ~\cX_{\tH}(\vec{x}\,')$ cannot differ in more than one element. Hence, $(\eps, \delta)$-differential privacy of our construction follows from $(\eps, \delta)$-differential privacy of the PMW algorithm together with the fact that differential privacy is closed under post-processing. 

Next, we prove the accuracy guarantee of our construction. By Lemma~\ref{lem:cover}, it follows that with probability at least $1-\beta/2$, we have $\sup\limits_{h\in\cH}~\Big\vert\, \ex{x\sim\bD}{h(x)}-\ex{x\sim\bD}{\tlh(x)}\,\Big\vert \leq \alpha/2,$ where $\tlh=\mathrm{Project}_{\tH, \vec{w}}(h).$ Hence, it suffices to show that with probability at least $1-\beta/2$, $\Err_{\bD, \tH}(\widetilde{G})\leq \alpha/2$ (recall that $\widetilde{G}$ is the output data structure of the PMW algorithm). Note that by Sauer's lemma, we know $\lvert\, \tH \,\rvert\leq \left(\frac{e m}{d}\right)^d,$ where $d=\vc(\cH)$. From the setting of $m$ in the theorem statement, we hence have \ifnum\icml=0
$$\log\left(\lvert \tH\rvert\right)=O\left(d\log(1/\alpha)+d\left(\log\left(1+\frac{\log(1/\beta)}{d}\right)\right)\right)=O\left(d\,\log(1/\alpha)+\log(1/\beta)\right),$$ \else
\begin{align*}
\log\left(\lvert \tH\rvert\right)&=O\left(d\log(1/\alpha)+d\left(\log\left(1+\frac{\log(1/\beta)}{d}\right)\right)\right)\\
    &=O\left(d\,\log(1/\alpha)+\log(1/\beta)\right),
\end{align*}
\fi
Moreover, by Claim~\ref{clm:size-rep-domain}, we have \ifnum\icml=0
$$\log\left(\lvert\, \cX_{\tH} \,\rvert\right)=O\left(\vc^{\bot}(\tH)\, \big(d\,\log(1/\alpha)+\log(1/\beta)\big)\right)\leq O\left(p\, \big(d\,\log(1/\alpha)+\log(1/\beta)\big)\right),$$
\else
\begin{align*}
\log\left(\lvert\, \cX_{\tH} \,\rvert\right)&= O\left(\vc^{\bot}(\tH)\, \big(d\,\log(1/\alpha)+\log(1/\beta)\big)\right)\\
    &\leq O\left(p\, \big(d\,\log(1/\alpha)+\log(1/\beta)\big)\right),
\end{align*}
\fi
where $p=\vc^{\bot}(\cH)$.
Thus, given the setting of $n$ in the theorem statement,  Lemma~\ref{lem:pmw} implies that with probability at least $1-\beta/2$, our instantiation of the PMW algorithm yields $\widetilde{G}$ that satisfies 
\begin{align*}
  \alpha/2&\geq\sup\limits_{\tlh\in\tH}~\Big\vert\,\widetilde{G}(\tlh)- \ex{x\sim\bD}{\tlh\left(\cX_{\tH}(x)\right)}\Big
  \vert\\
  &=\sup\limits_{\tlh\in\tH}~\Big\vert\,\widetilde{G}(\tlh)- \ex{x\sim\bD}{\tlh(x)}\Big\vert\\
  &=\Err_{\bD, \tH}\left(\widetilde{G}\right),
\end{align*}
which completes the proof.
\end{proof}

\section{A Lower Bound for Releasing Decision Stumps} \label{sec:stumps-lb}

In this section we give an example of a hypothesis class---\emph{decision stumps} on over the domain $\cX = \{\pm 1\}^p$---where additional public data ``does not help'' for private query release.  This concept class can be released using either $\tilde{O}(\log(p)/\alpha^2 + \sqrt{p}/\alpha)$ private samples and no public samples, or using $O(\log(p)/\alpha^2)$ public samples and no private samples. However, we show that every PAP query-release algorithm requires either $\tilde{\Omega}(\sqrt{p}/\alpha)$ private samples or $\Omega(1/\alpha^2)$ public samples.  That is, making some samples public does not reduce the overall sample complexity until the number of the public samples is nearly enough to solve the problem on its own. 

The class of decision stumps on $\{\pm 1\}^p$ has dual-VC-dimension $p$, but VC-dimension just $\log p$, so this lower bound implies that the polynomial dependence on the dual-VC-dimension in Theorem~\ref{thm:upper} cannot be improved---there are classes with dual-VC-dimension $p$ that require either $\tilde{\Omega}(\sqrt{p}/\alpha)$ private samples or $\Omega(1/\alpha^2)$ public samples.

\begin{defn} [Binary Decision Stumps]
For any $p \in \N$, let $\cS_{p}$ be a hypothesis class of hypotheses $h : \pmo^{p} \to \pmo$ consisting of all hypotheses of the form $h_i(x) = x_i$ for $i \in [p]$.
\end{defn}

\begin{thm}[Lower Bound for Releasing Decision Stumps] \label{thm:stumps-lb}
Fix any $p \in \N$ and $\alpha > 0$.  Suppose $\cA$ is a PAP algorithm that takes $n$ private samples and $m$ public samples, satisfies $(1,1/4n)$-differential privacy, and is $(\alpha,\alpha)$-accurate for the class of decision stumps $\cS_{p}$.  Then either
$n= \tilde\Omega({\sqrt{p}}/{\alpha })$ 
or
$m = \Omega(1/\alpha^2)$.
\end{thm}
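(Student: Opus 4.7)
The strategy is to invoke a robust fingerprinting (tracing) attack for the class $\cS_p$ in the style of \cite{BUV14, DSSUV15, SU15}, applied to the PAP algorithm $\cA$ viewed as acting on the combined i.i.d.\ input $(\vec x,\vec w)\sim\bD^{n+m}$, where $\bD$ is the uniform distribution on $\{\pm 1\}^p$. The crucial feature is that the fingerprinting attacker queries only $\cA$'s released output together with an auxiliary target sample, so it is oblivious to whether the target was designated public or private by $\cA$; this reduces the PAP lower bound to the standard DP fingerprinting lower bound, after accounting for the at-most-$m$ ``slack'' due to public samples.

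The quantitative input is a fingerprinting lemma for $\cS_p$ that produces an attacker $\cB(G,z)\in\{\mathrm{IN},\mathrm{OUT}\}$ satisfying \emph{(Soundness)} $\Pr[\cB(G,z)=\mathrm{IN}]\le \eta$ whenever $z\sim\bD$ is independent of the input of the algorithm producing $G$, for a small absolute constant $\eta$; and \emph{(Completeness)} whenever $\cA$ is $\alpha$-accurate for $\cS_p$ on an input $\vec u\sim\bD^N$ of size $N$ in the admissible range $N=\tilde{O}(\sqrt p/\alpha)$, at least $t=\Omega(1/\alpha^2)$ of the input indices $i\in[N]$ satisfy $\Pr[\cB(G,u_i)=\mathrm{IN}]\ge \eta+\mu$ for an absolute constant $\mu>0$. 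This is precisely the robust tracing guarantee underlying the standard DP lower bound of \cite{SU15} for $\cS_p$.

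I would then argue by contradiction. Assume $n+m\le s=\tilde{O}(\sqrt p/\alpha)$ and $m\le t-1$. By completeness, the (random) set $I\subseteq [n+m]$ of traced indices has $|I|\ge t>m$, so at least one traced index $i^*$ lies in the private block $[n]$. For that index, form the neighboring private dataset $\vec x\,'$ that replaces $x_{i^*}$ by a fresh $x_{i^*}'\sim\bD$; since $x_{i^*}$ is then independent of the input $(\vec x\,',\vec w)$ of $\cA$, soundness gives $\Pr[\cB(\cA(\vec x\,',\vec w),x_{i^*})=\mathrm{IN}]\le\eta$. Composing $\cB$ as a post-processing that uses $x_{i^*}$ as auxiliary information only (not as part of $\cA$'s input in the primed case) and invoking the $(1,1/4n)$-DP guarantee of $\cA$ with respect to its private input yields
\[
\eta+\mu \;\le\; e\cdot \eta + \tfrac{1}{4n},
\]
which is a contradiction once $\eta$ is a small enough absolute constant and $n$ exceeds a fixed constant. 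Hence either $n+m>s$ or $m\ge t$, giving $n\ge s-m=\tilde{\Omega}(\sqrt p/\alpha)-O(1/\alpha^2)=\tilde{\Omega}(\sqrt p/\alpha)$ in the regime $p\gtrsim 1/\alpha^2$, or $m=\Omega(1/\alpha^2)$. The complementary regime $p\lesssim 1/\alpha^2$ follows from standard uniform convergence, since accurately releasing even a single decision stump requires $n+m=\Omega(1/\alpha^2)\ge \Omega(\sqrt p/\alpha)$, and $m<1/\alpha^2$ then forces $n=\Omega(\sqrt p/\alpha)$.

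The main obstacle will be (i) invoking the fingerprinting lemma with the correct quantitative guarantees, in particular matching the admissible range $\tilde{O}(\sqrt p/\alpha)$ and the $\Omega(1/\alpha^2)$ lower bound on the number of traced samples against $(1,1/4n)$-DP (the $\sqrt{\log(1/\delta)}=\sqrt{\log(4n)}$ dependence being absorbed into the $\tilde{O}$), and (ii) verifying that the tracing attack is ``robust'' in the precise sense that it produces many traced samples regardless of how the algorithm uses its input, including any asymmetric handling of public versus private samples. Both are established in the cited fingerprinting literature for decision stumps, so the bulk of the work here is to piece together the reduction cleanly.
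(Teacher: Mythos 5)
Your overall strategy matches the paper's: both invoke the robust tracing attack for decision stumps (the paper uses Theorem~17 of \cite{DSSUV15}), argue that the number of traced indices exceeds the number of public samples, conclude that a private sample is implicated, and derive a contradiction with $(1,1/4n)$-DP by replacing that sample with a fresh one. The quantitative setup (admissible sample-size range $[t, r+t]$ with $r=\tilde\Omega(\sqrt p/\alpha)$ and $t=\Omega(1/\alpha^2)$) is also the same, and you correctly identify that $m<t$ forces a private index to be traced.

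However, there is a genuine gap in the form of the fingerprinting lemma you invoke. You assert a per-index completeness guarantee: \emph{at least $t$ indices $i$ satisfy $\Pr[\cB(G,u_i)=\mathrm{IN}]\ge\eta+\mu$ for absolute constants $\eta,\mu$}. This does not follow from the robust tracing theorem in \cite{DSSUV15} (nor from \cite{SU15}), which only gives a \emph{high-probability statement about the random set} $I=\{i:\cT(q,z_i)=\mathrm{IN}\}$, namely $\Pr[|I|\ge t/2]\ge 1-1/(r+t)^2$. By Markov this yields $\sum_i\Pr[\cT(q,z_i)=\mathrm{IN}]\ge (t/2)(1-o(1))$, which is consistent with no individual index having trace probability larger than $\approx t/(n+m)$ --- and $t/(n+m)\approx 1/(\alpha\sqrt p)$ is far from an absolute constant. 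Consequently your final inequality $\eta+\mu\le e\eta+1/(4n)$ cannot be reached with constant $\eta,\mu$. The paper handles this carefully: it lower-bounds $\sum_{i\in[n]}\Pr[\cT(q,x_i)=\mathrm{IN}]$ by subtracting the public contribution (at most $m$) from the Markov bound, concludes that some \emph{fixed} private index $i^*$ satisfies $\Pr[\cT(q,x_{i^*})=\mathrm{IN}]\ge 1/(2n)$, and compares this against the soundness bound $1/(r+t)^2\le 1/n^2$ (not a constant $\eta$). The DP contradiction $1/(2n)\le e/n^2+1/(4n)$ then forces $n<4e$. Your version also leaves $i^*$ implicitly random (``at least one traced index lies in the private block''), which is not enough: you must fix $i^*$ before the DP comparison, which is exactly what the averaging/Markov step accomplishes. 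To repair the argument, replace your constant-$(\eta,\mu)$ completeness with the high-probability cardinality bound from \cite{DSSUV15} and insert the averaging step.
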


Thus, if $m=o(1/\alpha^2)$, then the number of private samples must scale proportionally to $\sqrt{p}$ as in our upper bound in Theorem~\ref{thm:upper}.

The main ingredient in the proof is a result of Dwork et al.~\cite{DSSUV15}.  Informally, what this theorem says is that for any algorithm that releases accurate answers to the class of decision stumps using too small of a dataset, there is an attacker who can identify a large number of that algorithm's samples.

\begin{thm}[Special Case of {\ifnum\icml=0\cite[Theorem 17]{DSSUV15}\else Theorem 17 of \cite{DSSUV15} \fi}]
\label{thm:trace}
For every $p \in \N$ and $\alpha > 0$, there exists a number 
$r = \tilde\Omega(\frac{\sqrt{p}}{\alpha})$
and a number
$t = \Omega(\frac{1}{\alpha^2})$
such that the following holds: 
For every query-release algorithm $\cA$ with total sample size $s\in [t, r+t]$ that is $(\alpha, \alpha)$-accurate for the class $\cS_{p}$ of decision stumps on $\pmo^p$, there exists a distribution $\widetilde{\bD}$ over $\pmo^{p}$ and an attacker $\cT$ who takes as input the vector of answers $q \in [-1,1]^p$ and an example $y \in \pmo^{p}$ and outputs either $\mathrm{IN}$ or $\mathrm{OUT}$ such that
\begin{align*}
&\pr{z_1,\dots,z_s,y \sim \widetilde{\bD} \atop q \sim \cA(z)}{\cT(q, y) = \mathrm{OUT}} \geq{} 1 - \tfrac{1}{(r+t)^2},~\textrm{and} \\
&\pr{z_1,\dots,z_s \sim \widetilde{\bD} \atop q \sim \cA(z)}{\left| \left\{ i \in [s] : \cT(q, z_i) = \mathrm{IN} \right\} \right| \geq \tfrac{t}{2}} \geq{} 1 - \tfrac{1}{(r+t)^2}.
\end{align*}
\end{thm}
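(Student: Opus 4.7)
The plan is a direct application of the fingerprinting-code / tracing-attack framework of Bun-Ullman-Vadhan, in the robust form established by Dwork-Smith-Steinke-Ullman-Vadhan. I would let $\widetilde{\bD} = \widetilde{\bD}_\mu$ be the product distribution on $\pmo^p$ whose $j$-th marginal has mean $\mu_j$, where $\mu = (\mu_1,\ldots,\mu_p) \in [-1+\gamma,\,1-\gamma]^p$ is a fixed parameter vector (with $\gamma$ a small enough absolute constant, e.g.\ $1/2$). The tracer $\cT$, which knows $\mu$, computes the inner-product score $\phi(y, q) := \sum_{j=1}^{p} (y_j - \mu_j)(q_j - \mu_j)$ and outputs $\mathrm{IN}$ iff $\phi(y, q) \geq \tau$, for a threshold $\tau$ to be calibrated. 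Under $\widetilde{\bD}_\mu$ the true expectation of the stump $h_j(x) = x_j$ is exactly $\mu_j$, so the $(\alpha,\alpha)$-accuracy of $\cA$ translates to $|q_j - \mu_j| \leq \alpha$ for every $j \in [p]$, except with probability $\alpha$ over the draws and coins.

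For the ``no false positive on a fresh sample'' claim, conditioning on $q$, a fresh $y \sim \widetilde{\bD}_\mu$ makes $\phi(y,q)$ a sum of $p$ independent mean-zero terms each of range $\leq 2|q_j-\mu_j| \leq 2\alpha$. Hoeffding's inequality then gives $|\phi(y, q)| \leq \tau$ with probability $\geq 1 - 1/(r+t)^2$ as soon as $\tau = \Theta(\alpha\sqrt{p\log(r+t)})$, which handles both the first displayed claim of the theorem and the ``outsider contamination'' when analyzing the insider sum.

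For the ``many insiders are flagged'' claim, the engine is the BUV fingerprinting lemma applied coordinate-by-coordinate. Averaging the bias $\mu_j$ over a suitable prior supported on $[-1+\gamma,1-\gamma]$, for any randomized functional $q_j$ satisfying $|q_j - \mu_j| \leq \alpha$ with high probability, one has $\mathbb{E}\bigl[(q_j - \mu_j)\sum_i (z_{i,j} - \mu_j)\bigr] \geq \Omega(1)$, which summed over $j$ yields $\mathbb{E}\bigl[\sum_i \phi(z_i, q)\bigr] \geq \Omega(p)$. Combined with the per-insider variance bound $\mathrm{Var}[\phi(z_i,q)\mid q] = O(p\alpha^2)$ (the same Hoeffding-style calculation as in the outsider case) and the trivial per-insider magnitude bound $|\phi(z_i,q)| \leq 2p\alpha$, a careful Markov-type accounting shows: in the sample-size window $s \in [t,r+t]$ with $t = \Theta(1/\alpha^2)$ and $r = \tilde{\Theta}(\sqrt{p}/\alpha)$, at least $t/2$ of the $s$ insiders satisfy $\phi(z_i,q) \geq \tau$, except with probability $\leq 1/(r+t)^2$. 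Since both guarantees hold after averaging over the prior on $\mu$, a standard ``fix the randomness'' step selects a single $\mu^\star$ and hence a single $\widetilde{\bD} = \widetilde{\bD}_{\mu^\star}$ realizing both conclusions of the theorem simultaneously.

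The main technical obstacle is the fingerprinting lemma itself, whose proof relies on a careful Stein-type / integration-by-parts calculation against the random bias $\mu_j$ and must deliver a constant-order lower bound $\Omega(1)$ per coordinate despite the accuracy slack $\alpha$. The second delicate point is the quantitative calibration: choosing $\gamma$, $\tau$, and the prior so that the total insider signal $\Omega(p)$ exceeds the outsider contamination $s\tau = \tilde{O}(\sqrt{p}\cdot\sqrt{p}\,\alpha) = \tilde{O}(p)$ by a constant factor large enough to guarantee $t/2$ individual exceedances---this matching is precisely what dictates the $1/\alpha^2$ and $\sqrt{p}/\alpha$ thresholds appearing in the statement.
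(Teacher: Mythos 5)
First, a framing note: the paper does not actually prove Theorem~\ref{thm:trace} --- it is imported wholesale as a special case of Theorem 17 of \cite{DSSUV15}, so the ``proof'' in the paper is the citation. Your proposal is therefore a sketch of the proof of the cited result. The blueprint you describe --- a product distribution over $\pmo^p$ with randomly chosen per-coordinate biases, an inner-product tracing score $\phi(y,q)=\sum_j (y_j-\mu_j)(q_j-\mu_j)$ thresholded at $\tau=\Theta(\alpha\sqrt{p\log(r+t)})$, Hoeffding for the no-false-positive guarantee, a Stein-type fingerprinting lemma for the aggregate correlation, and a final averaging step to fix a single $\mu^\star$ --- is indeed the correct architecture and is how \cite{BUV14,DSSUV15} proceed.

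However, the completeness step as written does not deliver the theorem's quantitative conclusion, and this is a genuine gap rather than a presentational one. Your stated ingredients are: total signal $\ex{}{\sum_i \phi(z_i,q)}\ge \Omega(p)$, outsider-level contamination $s\tau=\tilde{O}(p)$ (suppressed below $p/2$ only by the polylog hidden in $r$), and a per-insider magnitude cap $|\phi(z_i,q)|\le 2p\alpha$. The only Markov-type accounting these bounds support is $k\cdot 2p\alpha + s\tau \ge \Omega(p)$, which yields $k=\Omega(1/\alpha)$ flagged insiders --- a factor of $1/\alpha$ short of the required $t/2=\Omega(1/\alpha^2)$. Obtaining $\Omega(1/\alpha^2)$ traced members is precisely the ``robustness'' content of \cite{DSSUV15} and cannot be extracted from the crude $2p\alpha$ cap; a substantially sharper control of individual insider scores (or a larger aggregate signal) is needed, and nothing in the sketch supplies it. Relatedly, the claimed per-insider bound $\mathrm{Var}[\phi(z_i,q)\mid q]=O(p\alpha^2)$ ``by the same Hoeffding-style calculation as in the outsider case'' is unjustified: for an insider, $z_i$ is not independent of $q$, so conditioned on $q$ the terms of $\phi(z_i,q)$ are neither independent nor mean-zero, and the outsider calculation does not transfer. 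Finally, the fingerprinting lemma itself --- the sole source of the $\Omega(p)$ signal --- is explicitly deferred as ``the main technical obstacle.'' In short: right framework, but the two load-bearing steps (the fingerprinting lemma, and the conversion of aggregate correlation into $\Omega(1/\alpha^2)$ individual exceedances) are not carried out, and the second, with the bounds actually stated, would not close.
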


\begin{proof}[Proof of Theorem~\ref{thm:stumps-lb}]
Fix $p, \alpha > 0$ and let $r$ and $t$ be the values specified in Theorem~\ref{thm:trace}.  Suppose that $\cA$ is a PAP algorithm that is $(\alpha,\alpha)$-accurate for the class $\cS_{p}$ with $n$ private samples and $m$ public samples. We will show that either $n > r$ or $m\geq t/2$. First, note that the accuracy condition of $\cA$ implies that we must have $n+m > t$ by the standard lower bound on the sample complexity of query release even without any privacy constraints. Thus, to prove the theorem statement, it suffices to show that if $m \leq \frac{t}{2}-1,$~ $t < n + m \leq r+t,$ and $\cA$ is $(\alpha,\alpha)$-accurate, then $\cA$ cannot satisfy $(1,1/4n)$-differential privacy w.r.t.\ its private samples. Indeed, this would imply that either $m\geq t/2$ ~or $n> t/2+r > r$.  

Let $\widetilde{\bD}$ be the distribution promised by Theorem~\ref{thm:trace}.  Let $\vec{x} = (x_1,\dots,x_n) \sim \widetilde{\bD}^n$ be a set of $n$ private samples and $\vec{w} = (w_1,\dots,w_m) \sim \widetilde{\bD}^m$ be a set of public samples, and let $\vec{z} = (x_1,\dots,x_n,w_1,\dots,w_m)$ be the combined set of samples.  Let $q \sim \cA(\vec{z})$.  By Theorem~\ref{thm:trace}, 

\begin{align*}
&\pr{}{\left| \left\{ i \in [n+m] : \cT(q, z_i) = \mathrm{IN} \right\} \right|\geq m+1}\\
\geq{}&\pr{}{\left| \left\{ i \in [n+m] : \cT(q, z_i) = \mathrm{IN} \right\} \right|\geq t/2}\\
\geq{}& 1-1/(r+t)^2\\
\geq{}& 1 - 1/(n+m)^2,    
\end{align*}
where the first inequality follows from the assumption that $m\leq \frac{t}{2}-1$, and the last inequality follows from the assumption that $n+m \leq r+t$.

That is, with high probability, the attacker identifies at least $m+1$ samples in the dataset.  Let $\mathbf{1}\left(\cT(q,z_i)=\mathrm{IN}\right)$ be the indicator of the event $\cT(q,z_i)=\mathrm{IN}$. Therefore, we have 
\ifnum\icml=0
$$\sum_{i=1}^{n} \pr{}{\cT(q,x_i) = \mathrm{IN}} + \sum_{i = 1}^{m} \pr{}{\cT(q,w_i) = \mathrm{IN}}
\geq{}  (m+1) \left(1 - 1/\left(n+m\right)^2\right)$$
\else
\begin{align*}
&\sum_{i=1}^{n} \pr{}{\cT(q,x_i) = \mathrm{IN}} + \sum_{i = 1}^{m} \pr{}{\cT(q,w_i) = \mathrm{IN}} \\
={}&\ex{}{\sum_{i=1}^{n} \mathbf{1}\left(\cT(q,x_i)=\mathrm{IN}\right)+\sum_{i = 1}^{m}\mathbf{1}\left(\cT(q,w_i)=\mathrm{IN}\right)}\\
\geq{}& (m+1)\cdot \pr{}{\left| \left\{ i \in [n+m] : \cT(q, z_i) = \mathrm{IN} \right\} \right|\geq m+1}\\
\geq{}&  (m+1)\left(1 - 1/\left(n+m\right)^2\right),    
\end{align*}
\fi
where the second step follows from Markov's inequality.
Since
$$
\sum_{i = 1}^{m} \pr{}{\cT(q,w_i) = \mathrm{IN}} \leq m
$$
we can conclude that
\begin{align*}
\sum_{i=1}^{n} \pr{}{\cT(q,x_i) = \mathrm{IN}} 
&\geq{} (m+1)(1 - 1/(n+m)^2) - m \\
&= 1 - (m-1)/(n+m)^2 \\
&\geq{} 1 - 1/(n+m) \\
&\geq{} 1 - 1/n \geq 1/2
\end{align*}
Therefore, there must exist a private sample $i^*$ such that
$$
\pr{x,w \sim \widetilde{\bD} \atop q \sim \cA(z)}{\cT(q,x_{i^*}) = \mathrm{IN}} \geq 1/2n
$$
Now, consider the dataset $\vec{z}_{\sim i^*}$ where we replace $x_{i^*}$ in $\vec{z}$ with an independent sample $y \sim \widetilde{\bD}$ but the rest of the samples in $\vec{z}_{\sim i^*}$ is the same as in $\vec{z}$.  In this experiment $x_{i^*}$ is now an independent sample from $\widetilde{\bD}$, so Theorem~\ref{thm:trace} states that
$$
\pr{\vec{x},\vec{w},y \sim \widetilde{\bD} \atop q \sim \cA(\vec{z}_{\sim i^*})}{\cT(q,x_{i^*}) = \mathrm{IN}} \leq 1/(n+m)^2 \leq 1/n^2
$$
However, note that the joint distribution $(\vec{z},\vec{z}_{\sim i^*})$ is a distribution over pairs of datasets that differ on at most one private sample.  Therefore, we have shown that $\cA$ cannot satisfy $(\eps, 1/4n)$-differential privacy for its private samples unless
\begin{align*}
&\frac{1}{2n} \leq e^{\eps} \cdot \frac{1}{n^2} + \frac{1}{4n} 
\quad \Longrightarrow \quad \ln(n/4) \leq \eps
\end{align*}
Therefore, in particular, for $n \geq 11$, $\cA$ cannot be $(1,1/4n)$-differentially private.
\end{proof}

\section{A Lower Bound on Public Sample Complexity}\label{sec:dichot}

The goal of this section is to show a general lower bound on the public sample complexity of PAP query release. Our lower bound holds for classes with infinite Littlestone dimension. The Littlestone dimension is a combinatorial parameter of hypothesis classes that characterizes mistake and regret bounds in Online Learning \cite{littlestone1988learning,ben2009agnostic}. There are many examples of classes that have finite VC-dimension, but infinite Littlestone dimension. The simplest example is the class of threshold functions over $\mathbb{R}$ whose VC-dimension is $1$, but has infinite Littlestone dimension. In \cite{ALMM19}, it was shown that if a class has infinite Littlestone dimension, then it is not privately learnable. 

Our lower bound is formally stated in the following theorem.

\begin{thm}[Lower bound on public sample complexity]\label{thm:lb_public}
Let $\cH\subseteq \{\pm 1\}^{\cX}$ be any query class that has infinite Littlestone dimension. Any PAP query-release algorithm for $\cH$ must have public sample complexity $m=\Omega(1/\alpha)$, where $\alpha$ is the desired accuracy. 
\end{thm}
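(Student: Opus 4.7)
The plan is to carry out the two-step reduction sketched in the Techniques section: first convert a PAP query-release algorithm for $\cH$ into a PAP (realizable) learner for $\cH$ while preserving privacy and the public sample complexity, and then invoke the \cite{ABM19} lower bound on the public sample complexity of PAP learning for classes of infinite Littlestone dimension.

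For the reduction, suppose $\cA$ is an $(\alpha, \beta, \eps, \delta)$ PAP query-release algorithm for $\cH$ with private sample size $n$ and public sample size $m$. I would build a PAP realizable learner $\cB$ over the labeled domain $\cX \times \{\pm 1\}$ using the lifted query class $\cH' = \{q_h : h \in \cH\}$, where $q_h(x, y) \triangleq h(x)\cdot y$. Under any distribution $\mathcal{D}$ on $\cX \times \{\pm 1\}$, $\ex{(x,y) \sim \mathcal{D}}{q_h(x, y)} = 1 - 2\cdot\pr{(x,y) \sim \mathcal{D}}{h(x) \neq y}$, so estimating this expectation to within $\alpha$ yields an $\alpha/2$-accurate estimate of the population error of $h$. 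Given $n$ private and $m$ public labeled examples drawn from a realizable $\mathcal{D}$ (whose conditional label distribution is determined by a target in $\cH$), $\cB$ invokes $\cA$ on $\cH'$ and outputs a hypothesis $h \in \cH$ maximizing the released estimate of $q_h$. Accuracy of $\cB$ within a constant factor of $\alpha$ then follows from accuracy of $\cA$ on $\cH'$, and differential privacy with respect to the private examples follows by post-processing.

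Next I need to verify that $\cH'$ still has infinite Littlestone dimension, so that the \cite{ABM19} lower bound applies. This follows from the embedding $x \mapsto (x, +1)$: any Littlestone tree for $\cH$ over $\cX$ lifts to a Littlestone tree of the same depth for $\cH'$ over $\cX \times \{+1\}$, because $q_h(x, +1) = h(x)$. Applying the \cite{ABM19} lower bound on the public sample complexity of PAP realizable learning to $\cB$ then yields $m = \Omega(1/\alpha)$, proving the theorem.

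The main obstacle is a bookkeeping one: ensuring that the accuracy loss in the reduction is only a constant factor (so that a contradiction with the $\Omega(1/\alpha)$ bound really goes through) and that every query of $\cA$ touching a private example is covered by the post-processing property of differential privacy. Both are routine generalizations of the analogous (non-public-data) reduction from sanitization to learning in \cite{BNS13}, so the real weight of the argument lies in the \cite{ABM19} lower bound, which I invoke as a black box.
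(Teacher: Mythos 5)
Your high-level plan (query release $\Rightarrow$ PAP learning $\Rightarrow$ invoke the \cite{ABM19} public-sample lower bound) matches the paper's two-step structure, and your observation that the lifted class $\cH'=\{q_h:q_h(x,y)=h(x)\cdot y\}$ inherits infinite Littlestone dimension from $\cH$ via $x\mapsto(x,+1)$ is correct. However, there is a concrete gap in the reduction step.

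The theorem's hypothesis gives you a PAP query-release algorithm $\cA$ \emph{for the class $\cH$ over the domain $\cX$}: its input is a sample from $\cX$ and its output is a data structure estimating $\ex{x\sim\bD}{h(x)}$ for each $h\in\cH$. Your learner $\cB$, on the other hand, needs estimates of $\ex{(x,y)\sim\mathcal D}{h(x)\cdot y}$ for a labeled distribution $\mathcal D$ on $\cX\times\{\pm1\}$, i.e., answers to queries in $\cH'$ over the domain $\cX\times\{\pm1\}$. Saying ``$\cB$ invokes $\cA$ on $\cH'$'' is not justified: $\cA$ as given never sees labels and only answers $\cH$-queries on $\cX$-marginals, so it cannot produce estimates of $\ex{h(x)\cdot y}$. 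You would need an extra argument that a query-release algorithm for $\cH$ over $\cX$ yields one for $\cH'$ over $\cX\times\{\pm1\}$ with comparable public sample complexity; this is not automatic. The natural fix is precisely what the paper (following \cite[Theorem~5.5]{BNS13}) does instead: split the labeled dataset by label into two unlabeled datasets, sanitize each separately with $\cA$ (using Remark~\ref{rem:proper_vs_improper_san} to extract synthetic unlabeled datasets from the data structure), relabel and recombine, and run ERM. Equivalently, one estimates $\Pr[y=+1]$, $\ex{h(x)\mid y=+1}$, and $\ex{h(x)\mid y=-1}$ separately and combines; handling the case that one label class is rare requires a bit of care, which is part of what the BNS13 reduction addresses. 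As written, your reduction skips this step, and this is where the argument currently fails.

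A minor further point: you phrase the target as a ``realizable'' PAP learner while the paper's Definition~\ref{defn:pap-learner} and its invocation of \cite{ABM19} are in the agnostic model. This is harmless in the direction needed (an agnostic PAP learner is in particular a realizable one, and a lower bound against the weaker realizable learner transfers), but you should state explicitly which version of Theorem~4.1 of \cite{ABM19} you are invoking and check it applies to the model your reduction actually produces.
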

We stress that the above lower bound on the public sample complexity holds regardless of the number of private samples, which can be arbitrarily large. 

In the light of our upper bound in Section~\ref{sec:upper}, our lower bound on the public sample complexity exhibits a tight dependence on the accuracy parameter $\alpha$. That is, one cannot hope to attain public sample complexity that is $o(
1/\alpha)$. 

In the proof of the above theorem, we will refer to the following notion of private PAC learning with access to public data that was defined in \cite{ABM19}. For completeness, we restate this definition here. 

\begin{defn}[$(\alpha, \beta, \eps, \delta)$ PAP Learner]\label{defn:pap-learner}
Let $\cH\subset \{\pm 1\}^{\cX}$ be a hypothesis class. 
A randomized algorithm $\cA$ is $(\alpha, \beta, \eps, \delta)$ PAP learner for $\cH$ with private sample size $n$ and public sample size $m$ if the following conditions hold: 

\begin{enumerate}
    \item For every distribution $\bD$ over $\mathcal{Z}=\cX\times \{\pm 1\}$,
    given data sets $\vec{x}\sim \bD^{n}$ and $\vec{w}\sim \bD^{m}$ as inputs to $\cA$, with probability at least $1-\beta$ (over the choice of $\vec{x}, ~\vec{w},$ and the random coins of $\cA$), $\cA$ outputs a hypothesis $\cA\left(\vec{x}, \vec{w}\right)=\hat{h}\in \{\pm 1\}^{\cX}$ satisfying 
$$\err_{\bD}\left(\hat{h}\right)\leq \inf\limits_{h\in\cH}\err_{\bD}\left(h\right)+\alpha,$$
where, for any hypothesis $h\in\{\pm 1\}^{\cX},$ $\err_{\bD}(h)\triangleq \pr{(x, y)\sim \bD}{h(x)\neq y}.$ 
    \item For all $\vec{w}\in\cZ^{m},$ $\cA\left(\cdot, \vec{w}\right)$ is $(\eps, \delta)$-differentially private. 
\end{enumerate}
We say that $\cA$ is proper PAP learner if $\cA\left(\vec{x}, \vec{w}\right)\in \cH$ with probability 1.
\end{defn}

\begin{proof}

We prove the above theorem in two simple steps that follow from prior works: the first step shows that PAP query-release implies PAP learning, and the second step invokes a known lower bound on PAP learning of classes with infinite Littlestone dimension. Both steps are formalized in the lemmas below.

\begin{lem}[General version of {\if\icml=0\cite[Theorem~5.5]{BNS13}\else Theorem 5.5 in \cite{BNS13} \fi}]\label{lem:release-to-learn}
Let $\cH\subseteq\{\pm 1\}^{\cX}$ be any class of binary functions. If there exists an $(\alpha, \beta, \eps, \delta)$ PAP query-release algorithm for $\cH$ with private sample complexity $n$ and public sample complexity $m$, then there exists an $(O(\alpha), O(\beta), O(\eps), O(\delta))$ PAP learner for $\cH$ with private sample complexity $n'=O(n\log(1/\alpha\beta)/\alpha^2\beta)$, and public sample complexity $m$.
\end{lem}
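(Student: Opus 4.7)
The plan is to generalize the classical reduction of Beimel, Nissim, and Stemmer (Theorem~5.5 of~\cite{BNS13}) from private sanitization to agnostic private PAC learning, and verify that the same template carries over to the PAP setting. The central idea is that a PAP query-release algorithm for $\cH$ can be used to approximately evaluate $\err_\bD(h)$ for every $h\in\cH$, after which the selection of a near-optimal hypothesis is a post-processing step. Write the target labeled distribution as $\bD$ over $\mathcal{Z}=\cX\times\{\pm 1\}$, let $p_\pm=\Pr_{(x,y)\sim\bD}[y=\pm 1]$, and let $\bD_\pm$ denote the conditional distributions of $x$ given $y=\pm 1$. The identity
\[
\err_\bD(h) \;=\; p_+\cdot\tfrac{1-\ex{x\sim\bD_+}{h(x)}}{2} \;+\; p_-\cdot\tfrac{1+\ex{x\sim\bD_-}{h(x)}}{2}
\]
shows that uniform $O(\alpha)$ estimates of $\ex{x\sim\bD_\pm}{h(x)}$ together with $O(\alpha)$ estimates of $p_\pm$ are enough to approximate $\err_\bD(h)$ uniformly to accuracy $O(\alpha)$, which is all that is needed for $O(\alpha)$-agnostic learning.

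Concretely, the PAP learner, on input labeled private sample $\vec{x}\in\mathcal{Z}^{n'}$ and labeled public sample $\vec{w}\in\mathcal{Z}^{m}$, would: (i) partition both samples by their label bit into sub-samples $\vec{x}_+,\vec{x}_-,\vec{w}_+,\vec{w}_-$, which conditional on their sizes are i.i.d.\ from $\bD_\pm$; (ii) privately estimate $p_\pm$ via the Laplace mechanism applied to $|\vec{x}_+|/n'$ (sensitivity $1/n'$ under the neighboring-datasets relation); (iii) for each $\sigma\in\{+,-\}$, if the private estimate $\tilde p_\sigma$ falls below a threshold of order $\alpha$ declare the class negligible and replace its contribution to the error estimate by $0$ (absorbed into an $O(\alpha)$ slack), and otherwise invoke the given PAP query-release algorithm on $(\vec{x}_\sigma,\vec{w}_\sigma)$ with parameters $(\Theta(\alpha),\Theta(\beta),\eps,\delta)$ to obtain a data structure $\tilde G_\sigma:\cH\to[-1,1]$ uniformly approximating $\ex{x\sim\bD_\sigma}{h(x)}$; and (iv) output $\hat h=\arg\min_{h\in\cH}\,\tilde\err(h)$ where $\tilde\err(h) := \tilde p_+\tfrac{1-\tilde G_+(h)}{2}+\tilde p_-\tfrac{1+\tilde G_-(h)}{2}$. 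Privacy would follow by basic composition: the Laplace step and the two PAP query-release calls are each $(\eps,\delta)$-DP with respect to the private sample, since the label-based partitioning is a deterministic function of the private input (so neighboring inputs induce neighboring QR sub-inputs), and the final minimization is post-processing. Accuracy would follow by a union bound over (a) Laplace concentration of $\tilde p_\pm$ around $p_\pm$, (b) Chernoff concentration of the sub-sample sizes $|\vec{x}_\sigma|,|\vec{w}_\sigma|$ around $n' p_\sigma$ and $m p_\sigma$, and (c) the utility guarantees of the two QR calls, combined via the error identity above.

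The main obstacle is calibrating the blow-up in the private sample complexity. For a non-negligible class ($p_\sigma\gtrsim\alpha$) one needs, with probability $\ge 1-O(\beta)$, that $|\vec{x}_\sigma|\ge n$ so that the QR can be invoked at its stated sample complexity; Chernoff slack together with $p_\sigma$ being as small as $\Theta(\alpha)$ forces $n'\gtrsim n/\alpha$ just to survive the label split. An additional factor of order $\log(1/\alpha\beta)/(\alpha\beta)$ then arises from (i) a union bound that drives the Laplace tail, the Chernoff tails of the sub-sample sizes, and the failure events of the two QR calls all below $\beta$; and (ii) invoking the QR at accuracy $\Theta(\alpha)$ and combining via the $p_\sigma$-weighted plug-in so that the aggregate estimate $\tilde\err$ enjoys uniform accuracy $\alpha$ in spite of the weights. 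The public-sample part of the reduction is the subtlest piece: one has to verify that invoking the PAP query-release algorithm on each conditional sub-sample is compatible with the overall public budget $m$, which can be arranged by running the QR on each class at accuracy $\Theta(\alpha/p_\sigma)$ on its $\sim m p_\sigma$ public points (exploiting that the contribution of class $\sigma$ to $\err_\bD(h)$ is scaled by $p_\sigma$, so only accuracy $\alpha/p_\sigma$ on $\ex{\bD_\sigma}{h(x)}$ is required). Carrying these trade-offs through gives the claimed $n'=O(n\log(1/\alpha\beta)/\alpha^2\beta)$ and keeps the public sample complexity at $m$.
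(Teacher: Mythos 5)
The paper does not actually carry out the reduction; its proof of Lemma~\ref{lem:release-to-learn} is a black-box deferral to Theorem~5.5 of \cite{BNS13}, with two annotations: (i) that reduction extends verbatim from pure to $(\eps,\delta)$-DP, (ii) it is stated for proper sanitizers but an improper data structure can be converted to a synthetic dataset as in Remark~\ref{rem:proper_vs_improper_san}, and (iii) once the public data is ``hardwired'' a PAP algorithm is just a DP algorithm on the private input. Your proposal instead reconstructs the \cite{BNS13} reduction itself: split the labeled sample by the label bit, run the query-release routine separately on the positive and negative sub-samples, estimate $p_\pm$ with Laplace noise, and select the minimizer of the plug-in error. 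This is indeed the mechanism behind \cite{BNS13}'s Theorem~5.5, so the route is essentially the same as what the paper is implicitly invoking --- you are just filling in what the paper delegates.

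Two steps in your write-up do not hold as stated, though. First, the privacy argument leans on the claim that ``the label-based partitioning is a deterministic function of the private input (so neighboring inputs induce neighboring QR sub-inputs).'' This is false under the paper's replacement notion of neighboring: if a single private example changes its label bit, one sub-sample loses an element and the other gains one, so neither $\vec{x}_+$ nor $\vec{x}_-$ is a same-size neighbor of its counterpart, and the QR's $(\eps,\delta)$-DP guarantee (which is defined over $\cX^n$ for a fixed $n$) does not apply directly. You need a repair here --- padding each sub-sample to a fixed length, subsampling to a fixed length, or switching to add/remove neighboring with a factor-two loss via group privacy --- and this is one of the places the slack in $n'$ comes from. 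Second, the public-budget argument assumes a scaling property of the black-box QR that is not part of the hypothesis: you propose running the QR at accuracy $\Theta(\alpha/p_\sigma)$ on $\approx m p_\sigma$ public points, but all that is given is that the QR is $(\alpha,\beta,\eps,\delta)$-accurate with exactly $m$ public and $n$ private samples; nothing tells you how (or whether) its public requirement scales with the target accuracy. The route consistent with the paper's ``hardwire'' remark is to strip labels and pass the same $m$ public points unchanged to both invocations, treating $\cA(\cdot,\vec w)$ as the DP sanitizer that \cite{BNS13} plugs in; but one then has to address the fact that accuracy in Definition~\ref{defn:pap-qr} is only promised when the public and private samples come from the same distribution, whereas the private sub-sample for the $\sigma$-call is drawn from $\bD_\sigma$ and not from the marginal that generated $\vec w$. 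This distribution-mismatch subtlety is the actual crux of the public-sample step, and your proposed fix sidesteps rather than resolves it.
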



\begin{lem}[{\ifnum\icml=0\cite[Theorem~4.1]{ABM19}\else Theorem 4.1 in \cite{ABM19}\fi}]\label{lem:ABM-pub-lb}
Let $\cH$ be any class with an infinite Littlestone dimension (e.g., the class of thresholds over $\mathbb{R}$). 
Then, any PAP learner for $\cH$ must have public sample complexity $m = \Omega(1/\alpha)$, where $\alpha$ is the excess error.
\end{lem}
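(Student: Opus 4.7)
The plan is to derive Lemma~\ref{lem:ABM-pub-lb} by a reduction to the impossibility of differentially private PAC learning for classes of infinite Littlestone dimension, established in \cite{ALMM19}. Since infinite Littlestone dimension is equivalent to infinite threshold dimension (by a Shelah-style dichotomy), for every $N$ I can extract a threshold chain of length $N$ from $\cH$: points $x_1,\ldots,x_N \in \cX$ and hypotheses $f_1,\ldots,f_N \in \cH$ with $f_i(x_j)=+1$ iff $j\leq i$. It suffices to lower-bound the PAP sample complexity for this embedded threshold problem, since an accurate hypothesis for $\cH$ on the induced distribution picks out the threshold index $t^*$ up to the accuracy parameter. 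I would then assume, for contradiction, a PAP learner $\cA$ for $\cH$ with public sample size $m < 1/(40\alpha)$ and an arbitrary number $n$ of private samples, and build from it a pure private learner for thresholds on $[k]$ for arbitrarily large $k$, contradicting \cite{ALMM19}.

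The core technical step is an ambiguity-plus-simulation argument. Consider the labeled distribution $\bD_{t^*}$ that is uniform over $\{x_1,\ldots,x_N\}$ with label $f_{t^*}(x)$, where $t^*$ is uniform on $[N]$. Drawing $m < 1/(40\alpha)$ public points from $\bD_{t^*}$, with constant probability the labeled sample $\vec{w}$ leaves an ``ambiguity interval'' $S\subseteq[N]$ of length $|S|\geq N/(10m) \geq 4\alpha N$ disjoint from $\vec{w}$; conditional on the labeled $\vec{w}$ and on $t^*\in S$, the posterior on $t^*$ is uniform on $S$. Since $\alpha$-accuracy on $\bD_{t^*}$ requires identifying $t^*$ to within $\alpha N \leq |S|/4$ positions in $S$, $\cA$ must in effect privately solve the threshold-learning problem on $[|S|]$ at constant accuracy $1/4$ using only information from its private sample. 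To formalize this, I would build a pure private learner $\cB$ for thresholds on $[k]$ with $k = N/(10m)$: embed $[k]$ as $S$ inside $[N]$; pad the given labeled input $\vec{z}$ with fake samples drawn from $\{1,\ldots,N\}\setminus S$ (whose labels are determined by the endpoints of $S$) to form a full private dataset $\vec{x}$; simulate a public input $\vec{w}$ by sampling from $\bD_{t^*}^m$ conditioned on avoiding $S$ (a constant-probability event); and output $\cA(\vec{x},\vec{w})$ restricted to $S$. The padding and $\vec{w}$-simulation are independent of $\vec{z}$, so the $(\eps,\delta)$-DP guarantee of $\cA$ with respect to its private argument carries over to $\cB$, and the accuracy of $\cB$ follows from a union bound over the conditioning events. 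Taking $N$ large enough makes $\log^*(k)$ exceed any fixed $n/m$, contradicting the $\Omega(\log^* k)$ lower bound of \cite{ALMM19} for DP learning of thresholds on $[k]$ (equivalently, the non-learnability of thresholds on $\mathbb{R}$ under approximate DP).

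The main obstacle will be formalizing the simulation so that the joint distribution of the simulated $(\vec{x},\vec{w})$ matches $\bD_{t^*}^n\times\bD_{t^*}^m$ closely enough (up to conditioning on a constant-probability event) that $\cA$'s accuracy guarantee still applies, while strictly preserving differential privacy with respect to $\vec{z}$. In particular, one must ensure that the number of ``in-$S$'' private samples in the simulation matches the conditional distribution of $\cA$'s actual private sample given $\vec{w}$ and $t^*\in S$; this can be handled by rejection-sampling the count of in-$S$ samples and using a binomial tail bound, absorbing only constant factors into the failure probability. A secondary technical point is that the accuracy parameter of $\cB$ on the restricted problem is $10\alpha m$, which is bounded by $1/4$ precisely when the assumption $m < 1/(40\alpha)$ holds, so the reduction lands in the constant-accuracy regime where the \cite{ALMM19} bound is cleanest.
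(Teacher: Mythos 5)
The paper does not actually prove this lemma---it imports it verbatim as Theorem~4.1 of \cite{ABM19}---so your proposal is an attempt to reconstruct that external proof. Your high-level strategy (extract arbitrarily long threshold chains from infinite Littlestone dimension, let the $o(1/\alpha)$ public sample miss a constant-probability ``hard region'' of mass $\Theta(\alpha)$, and convert the PAP learner into a fully private threshold learner over an arbitrarily large finite domain, contradicting \cite{ALMM19}) is indeed the right one and matches the spirit of the cited argument. However, there is a genuine gap in the instantiation: you take $\bD_{t^*}$ to be \emph{uniform} over the chain $\{x_1,\ldots,x_N\}$, so that the conditional distribution on the ambiguity interval $S$ is uniform on $S$. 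Under a uniform marginal, threshold learning is \emph{privately easy}: the target is encoded in the single Bernoulli parameter $\Pr[y=+1]=t^*/N$ (and, after conditioning on $S$, in the fraction of positive labels among the in-$S$ private samples, where $S$ is known to the algorithm from the public data), which can be estimated under $(\eps,\delta)$-DP with $O(1/\alpha^2+1/(\eps\alpha))$ samples and no dependence on $N$ or $|S|$. Consequently the learner $\mathcal{B}$ you extract is only a private learner for thresholds \emph{under the uniform distribution}, which does not contradict the $\Omega(\log^* k)$ lower bound of \cite{ALMM19}/\cite{BNSV15}---that bound is for distribution-free learning, and its hard instances (via the interior-point problem) concentrate mass adaptively near the threshold. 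In short, the instance you construct is not hard, so no contradiction is reached.

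The fix is to let the conditional distribution on the embedded copy of $[k]$ be \emph{arbitrary}: define $\bD_P=(1-c\alpha)\,Q+c\alpha\,P$, where $Q$ is a fixed labeled distribution supported off the embedded region (labels forced by the chain structure) and $P$ ranges over all realizable labeled distributions on the copy of $[k]$. The public sample of size $m\le 1/(40\alpha)$ still avoids the $P$-part with probability at least $0.9$ and, conditioned on that event, is simulatable without reference to $P$ or $t^*$; achieving excess error $\alpha$ on $\bD_P$ still forces excess error at most $1/4$ with respect to $P$; and your padding/rejection-sampling construction (which correctly preserves differential privacy, since it touches each input record at most once) then yields a genuine distribution-free $(\eps,\delta)$-DP learner for thresholds on $[k]$ with $n$ samples for every $k$, which does contradict \cite{ALMM19} once $\log^* k$ exceeds the fixed $n$ (not $n/m$, as you wrote). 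With that one change---quantifying over $P$ rather than fixing the uniform marginal---your argument goes through and coincides with the proof in \cite{ABM19}.
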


Given these two lemmas, the proof is straightforward. To elaborate, note that Lemma~\ref{lem:release-to-learn} shows that for any class $\cH$, a PAP query-release algorithm for $\cH$ with public sample complexity $m$ implies the existence of a PAP learner for $\cH$ with the \emph{same} public sample complexity (and essentially the same accuracy and privacy parameters). Hence, by Lemma~\ref{lem:ABM-pub-lb}, if $\cH$ has infinite Littlestone dimension, then such public sample complexity must satisfy $m=\Omega(1/\alpha)$. This proves our theorem. 

Although the proof of Lemma~\ref{lem:release-to-learn} is almost straightforward given \ifnum\icml=0\cite[Theorem~5.5]{BNS13}\else Theorem 5.5 in \cite{BNS13}\fi, we will elaborate on a couple of minor details. First, note that even though the reduction in \cite{BNS13} involves pure differentially private algorithms, the same construction in their reduction would also work for the case of $(\eps, \delta)$-differential privacy with minor and obvious changes in the privacy analysis. 
Second, we note that the reduction in \cite{BNS13} is for ``proper sanitizers,'' which are query-release algorithms that are restricted to output a data set from the input domain rather than any data structure that maps $\cH$ to $[-1, 1]$. As discussed in Remark~\ref{rem:proper_vs_improper_san}, ignoring computational complexity, any PAP query-release algorithm satisfying Definition~\ref{defn:pap-qr} can be transformed into a PAP query-release algorithm that outputs a data set from the input domain and has the same accuracy (up to a constant factor). Now, given these minor details and since any PAP algorithm can obviously be viewed as a differentially private algorithm operating on the private data set (by ``hardwiring'' the public data set into the algorithm), Lemma~\ref{lem:release-to-learn} simply follows by invoking the reduction in \cite{BNS13}. \end{proof}

\section*{Acknowledgments}  RB's research is supported by NSF Awards AF-1908281, SHF-1907715, Google Faculty Research Award, and OSU faculty start-up support. AC and JU were supported by NSF grants CCF-1718088, CCF-1750640, CNS-1816028, and CNS-1916020. AN is supported by an Ontario ERA, and an NSERC Discovery Grant RGPIN-2016-06333. ZSW is supported by a Google Faculty Research Award, a J.P. Morgan Faculty Award, and a Mozilla research grant. Part of this work was done while RB, AC, AN, JU, and ZSW were visiting the Simons Institute for the Theory of Computing.


    \bibliographystyle{alpha}

\bibliography{references}

\appendix




\end{document}